\def\eqref#1{equation~\ref{#1}}
\def\1{\bm{1}}
\def\eps{{\epsilon}}
\def\vx{{\bm{x}}}
\DeclareMathAlphabet{\mathsfit}{\encodingdefault}{\sfdefault}{m}{sl}
\SetMathAlphabet{\mathsfit}{bold}{\encodingdefault}{\sfdefault}{bx}{n}
\newcommand{\ie}{{\emph{i.e.}},\xspace}
\newcommand{\viz}{{\emph{viz.}},\xspace}
\newcommand{\eg}{{\emph{e.g.}},\xspace}
\newcommand{\bjorck}{Bj\"{o}rck}
\newcommand{\PM}{\mathbin{\text{\raisebox{0.25ex}{\scalebox{0.6}{$\pm$}}}}}
\newcommand{\cin}{c_{\mathsf{in}}}
\newcommand{\cout}{c_{\mathsf{out}}}
\newcommand{\conv}{\mathsf{conv}}
\newcommand{\iconv}{\mathsf{conv}^{-1}}
\newcommand{\DFT}{\mathsf{FFT}}
\newcommand{\iDFT}{\mathsf{FFT}^{-1}}
\newcommand{\Real}{\mathbb{R}}
\newcommand{\Comp}{\mathbb{C}}
\newcommand{\vecc}{\operatorname{vec}}
\newcommand{\diag}{\operatorname{diag}}
\newcommand\mydots{\makebox[1em][c]{.\hfil.\hfil.}}
\newcommand{\kron}{\otimes}
\newcommand{\bsmatc}[2]{\begin{bsmallmatrix} #1 \\ #2 \end{bsmallmatrix}}
\newcommand{\bmatc}[2]{\begin{bmatrix} #1 \\ #2 \end{bmatrix}}
\newcommand{\bsmatr}[2]{\begin{bsmallmatrix} #1 & #2 \end{bsmallmatrix}}
\newcommand{\BO}{\mathbf{0}}
\newcommand{\cayley}{\mathsf{cayley}}
\newcommand{\cC}{\mathcal{C}}
\newcommand{\cD}{\mathcal{D}}
\newcommand{\cF}{\mathcal{F}}
\newtheorem{theorem}{Theorem}[section]
\newtheorem{corollary}{Corollary}[theorem]
\newtheorem{lemma}[theorem]{Lemma}
\newtheorem{proposition}[theorem]{Proposition}
\theoremstyle{remark}
\newtheorem*{remark}{Remark}
\theoremstyle{definition}
\newtheorem{definition}{Definition}[section]
\newenvironment{smallarray}[1]
 {\null\,\vcenter\bgroup\scriptsize
  \arraycolsep=.13885em
  \hbox\bgroup$\array{@{}#1@{}}}
 {\endarray$\egroup\egroup\,\null}
\definecolor{almond}{rgb}{0.94, 0.87, 0.8}
\definecolor{antiquewhite}{rgb}{0.98, 0.92, 0.84}
\definecolor{electriclavender}{rgb}{0.96, 0.73, 1.0}
\definecolor{dawnblue}{rgb}{0.84, 0.92, 1.0}
\definecolor{rdd}{rgb}{0.85, 0.8, 0.95}
\newcommand{\best}[1]{\colorbox{dawnblue}{\bf #1}}
\definecolor{Gray}{gray}{0.9}
\newcommand{\nb}[2]{
    \fcolorbox{gray}{yellow}{\bfseries\sffamily\scriptsize#1}
    {\sf\small$\blacktriangleright$\textit{#2}$\blacktriangleleft$}
   }
\newcommand{\nb}[2]{}
\newcommand{\todonr}[1]{\smash{\fcolorbox{gray}{red}{?}}}
\title{Orthogonalizing Convolutional Layers with the Cayley Transform}
\author{%
Asher Trockman \\
Computer Science Department\\
Carnegie Mellon University\\
\texttt{ashert@cs.cmu.edu} \\
\And
J. Zico Kolter \\
Computer Science Department\\
Carnegie Mellon University \&\\
Bosch Center for AI \\
\texttt{zkolter@cs.cmu.edu}
}
\begin{document}

\maketitle

\begin{abstract}
Recent work has highlighted several advantages of enforcing \emph{orthogonality} in the weight layers of deep networks,
such as maintaining the stability of activations, preserving gradient norms, and enhancing 
adversarial robustness by enforcing low Lipschitz constants.
Although numerous methods exist for enforcing the orthogonality of fully-connected layers,
those for convolutional layers are more heuristic in nature,
often focusing on penalty methods or limited classes of convolutions.
In this work, we propose and evaluate an alternative approach to directly parameterize convolutional layers
that are constrained to be orthogonal.
Specifically, we propose to apply the Cayley transform to a skew-symmetric convolution in the Fourier domain,
so that the inverse convolution needed by the Cayley transform can be computed efficiently.
We compare our method to previous Lipschitz-constrained and orthogonal convolutional layers
and show that it indeed preserves orthogonality to a high degree even for large convolutions.
Applied to the problem of certified adversarial robustness,
we show that networks incorporating the layer outperform existing deterministic methods for certified
defense against $\ell_2$-norm-bounded adversaries, while scaling to larger architectures than previously investigated.
Code is available at \url{https://github.com/locuslab/orthogonal-convolutions}.
\end{abstract}

\section{Introduction}
\looseness=-1
Encouraging \emph{orthogonality} in neural networks has proven to yield several compelling benefits.
For example, orthogonal initializations allow extremely deep vanilla convolutional neural networks
to be trained quickly and stably~\citep{xiao2018dynamical, saxe2013exact}.
And initializations that remain closer to orthogonality throughout training
seem to learn faster and generalize better~\citep{pennington2017resurrecting}.
Unlike Lipschitz-constrained layers,
orthogonal layers are \emph{gradient-norm-preserving}~\citep{anil2019sorting},
discouraging vanishing and exploding gradients and stabilizing activations~\citep{rodriguez2017regularizing}.
Orthogonality is thus a potential alternative to batch normalization in CNNs
and can help to remember long-term dependencies in RNNs~\citep{arjovsky2016unitary, vorontsov2017orthogonality}.
Constraints and penalty terms encouraging orthogonality can improve generalization in practice~\citep{bansal2018can,sedghi2018singular},
improve adversarial robustness by enforcing low Lipschitz constants,
and allow deterministic certificates of robustness~\citep{tsuzuku2018lipschitz}.

Despite evidence for the benefits of orthogonality constraints,
and while there are many methods to orthogonalize fully-connected layers,
the orthogonalization of convolutions has posed challenges.
More broadly, current Lipschitz-constrained convolutions rely on
spectral normalization and kernel reshaping methods~\citep{tsuzuku2018lipschitz},
which only allow loose bounds and can cause vanishing gradients.
\cite{sedghi2018singular} showed how to clip the singular values of convolutions
and thus enforce orthogonality, but relied on costly alternating projections to achieve tight constraints.
Most recently, \cite{bcop} introduced the Block Convolution Orthogonal Parameterization (BCOP),
which cannot express the full space of orthogonal convolutions. 

In contrast to previous work, we provide a \emph{direct}, expressive, and scalable
parameterization of orthogonal convolutions.
Our method relies on the Cayley transform, which is well-known for
parameterizing orthogonal matrices in terms of skew-symmetric matrices,
and can be easily extended to non-square weight matrices.
The transform requires efficiently computing the \emph{inverse} of
a particular convolution in the Fourier domain, which we show works well in practice.

We demonstrate that our Cayley layer is indeed orthogonal in practice when implemented
in 32-bit precision, irrespective of the number of channels.
Further, we compare it to alternative convolutional and Lipschitz-constrained layers:
we include them in several architectures and evaluate their \emph{deterministic} certifiable robustness
against an $\ell_2$-norm-bounded adversary.
Our layer provides state-of-the-art results on this task.
We also demonstrate that the layers empirically endow a considerable degree of robustness without adversarial training.
Our layer generally outperforms the alternatives, particularly for larger architectures.

\section{Related Work}

\textbf{Orthogonality in neural networks.}
The benefits of orthogonal weight initializations for \emph{dynamical isometry},
\ie ensuring signals propagate through deep networks,
are explained by \citet{saxe2013exact} and \citet{pennington2017resurrecting}, with
limited theoretical guarantees investigated by~\cite{Hu2020Provable}.
\citet{xiao2018dynamical} provided a method to initialize orthogonal convolutions,
and demonstrated that it allows the training of extremely deep CNNs without
batch normalization or residual connections.
Further, \citet{qi2020deep} developed a novel regularization term to encourage
orthogonality throughout training and showed its effectiveness for training very deep vanilla networks.
The signal-preserving properties of orthogonality can also help with remembering
long-term dependencies in RNNs, on which there has been much work~\citep{helfrich2018orthogonal, arjovsky2016unitary}.

One way to orthogonalize weight matrices is with the \emph{Cayley transform},
which is often used in Riemannian optimization~\citep{absil2009optimization}.
\cite{helfrich2018orthogonal} and \cite{maduranga2019complex}
avoid vanishing/exploding gradients in RNNs using the scaled Cayley transform.
Similarly, \cite{lezcano2019cheap} use the exponential map, which the Cayley transform approximates.
\cite{li2020efficient} derive an iterative approximation of the Cayley transform for orthogonally-constrained optimizers
and show it speeds the convergence of CNNs and RNNs.
However, they merely orthogonalize a matrix obtained by reshaping the kernel,
which is not the same as an orthogonal convolution~\citep{sedghi2018singular}.
Our contribution is unique here in that we \emph{parameterize orthogonal convolutions directly},
as opposed to reshaping kernels.

\textbf{Bounding neural network Lipschitzness.}
Orthogonality imposes a strict constraint on the Lipschitz constant, which itself
comes with many benefits:
Lower Lipschitz constants are associated with improved robustness~\citep{yang2020closer}
and better generalization bounds~\citep{bartlett2017spectrally}.
\cite{tsuzuku2018lipschitz} showed that neural network classifications
can be \emph{certified} as robust to $\ell_2$-norm-bounded perturbations
given a Lipschitz bound and sufficiently confident classifications.
Along with \cite{szegedy2013intriguing}, they noted
that the Lipschitz constant of neural networks can be bounded if the constants
of the layers are known.
Thus, there is substantial work on Lipschitz-constrained and regularized layers,
which we review in Sec.~\ref{other-layers}.
However, \cite{anil2019sorting} realized that mere Lipschitz constraints can
attenuate gradients, unlike orthogonal layers.

There have been other ideas for calculating and controlling the \emph{minimal} Lipschitzness of neural networks,
\eg through regularization~\citep{hein2017formal}, extreme value theory~\citep{weng2018evaluating}, or using semi-definite programming~\citep{latorre2020lipschitz, chen2020semialgebraic, fazlyab2019efficient},
but constructing bounds from Lipschitz-constrained layers is more scalable and efficient.
Besides \cite{tsuzuku2018lipschitz}'s strategy for deterministic certifiable robustness,
there are many approaches to deterministically verifying neural network defenses
using SMT solvers~\citep{huang2017safety, ehlers2017formal, carlini2017towards},
integer programming approaches~\citep{lomuscio2017approach, tjeng2017verifying, cheng2017maximum},
or semi-definite programming~\citep{raghunathan2018certified}.
\cite{scaling}'s approach to minimize an LP-based bound on the robust loss is more scalable,
but networks made from Lipschitz-constrained components can be more efficient still,
as shown by \cite{bcop} who outperform their approach.
However, none of these methods yet perform as well as probabilistic methods~\citep{cohen2019certified}.

Consequently, orthogonal layers appear to be an important component to enhance
the convergence of deep networks while encouraging robustness and generalization.

\section{Background}

\textbf{Orthogonality.}
Since we are concerned with orthogonal convolutions, we review orthogonal matrices:
A matrix $Q \in \mathbb{R}^{n \times n}$ is orthogonal if $Q^T Q = Q Q^T = I$.
However, in building neural networks, layers do not always have equal input and output dimensions:
more generally, a matrix $U \in \mathbb{R}^{m \times n}$ is \emph{semi-orthogonal} if $U^T U = I$ or $U U^T = I$.
Importantly, if $m \geq n$, then $U$ is also norm-preserving:
$\| U x \|_2 = \| x \|_2$ for all $ x \in \mathbb{R}^n.$
If $m < n$, then the mapping is merely non-expansive (a contraction), i.e., $\| U x \|_2 \leq \| x \|_2$.
A matrix having all singular values equal to $1$ is orthogonal, and vice versa.

\textbf{Orthogonal convolutions.}
The same concept of orthogonality applies to convolutional layers,
which are also linear transformations.
A convolutional layer $\conv:\Real^{c \times n \times n} \to \Real^{c \times n \times n}$ with $c = \cin = \cout$ input and output channels
is orthogonal if and only if
$\| \conv(X) \|_F = \| X \|_F$ for all input tensors $X \in \Real^{c \times n \times n}$;
the notion of semi-orthogonality extends similarly for $\cin \neq \cout$.
Note that orthogonalizing each convolutional kernel as in \cite{lezcano2019cheap, lezcano2019trivializations} does not
yield an orthogonal (norm-preserving) convolution.

\looseness=-1
\textbf{Lipschitzness under the $\ell_2$ norm.}
A consequence of orthogonality is 1-Lipschitzness.
A function $f: \mathbb{R}^n \to \mathbb{R}^m$ is $L$-Lipschitz with respect
to the $\ell_2$ norm iff
$\| f(x) - f(y) \|_2 \leq L \| x - y \|_2$ for all $x, y \in \mathbb{R}^n$.
If $L$ is the smallest such constant for $f$,
then it's called the \emph{Lipschitz constant} of $f$, denoted by $\mathsf{Lip}(f)$.
An useful property for certifiable robustness is that
the Lipschitz constant of the composition of $f$ and $g$
is upper-bounded by the product of their constants: $\mathsf{Lip}(f \circ g) \leq \mathsf{Lip}(f) \mathsf{Lip}(g)$.
Since simple neural networks are fundamentally just composed functions,
this allows us to bound their Lipschitz constants, albeit loosely.
We can extend this idea to residual networks using the fact that
$\mathsf{Lip(f + g)} \leq \mathsf{Lip}(f) + \mathsf{Lip}(g)$,
which motivates using a convex combination in residual connections.
More details can be found in \cite{bcop, szegedy2013intriguing}.

\textbf{Lipschitz bounds for provable robustness.}
If we know the Lipschitz constant of the neural network,
we can certify that a classification with sufficiently a large margin is robust
to $\ell_2$ perturbations below a certain magnitude.
Specifically, denote the \emph{margin} of a classification with label $t$ as
\begin{equation}
\mathcal{M}_f(x) = \max(0, y_t - \max_{i\neq t} y_i),
\end{equation}
which can be interpreted as the distance between the correct logit and the next largest logit.
Then if the logit function $f$ has Lipschitz constant $L$,
and  $\mathcal{M}_f(x) > \sqrt{2} L \eps$,
then $f(x)$ is certifiably robust to perturbations $\{\delta : \|\delta\|_2 \leq \eps\}$.
\cite{tsuzuku2018lipschitz} and \cite{bcop} provide proofs.

\section{The Cayley transform of a Convolution}
\label{sec:method}

\looseness=-1
Before describing our method, we first review discrete convolutions and the Cayley transform;
then, we show the need for inverse convolutions and how to compute them efficiently
in the Fourier domain, which lets us parameterize orthogonal convolutions via the Cayley transform.
The key idea in our method is that multi-channel convolution in the Fourier domain
reduces to a batch of matrix-vector products, and making each of those matrices
orthogonal makes the convolution orthogonal.
We describe our method in more detail in Appendix~\ref{apx:method}
and provide a minimal implementation in PyTorch in Appendix~\ref{apx:implementation}.

An unstrided convolutional layer with $\cin$ input channels and $\cout$
output channels has a weight tensor $W$ of shape $\Real^{\cout \times \cin \times n \times n}$
and takes an input $X$ of shape $\Real^{\cin \times n \times n}$
to produce an output $Y$ of shape $\Real^{\cout \times n \times n}$, \ie
$\conv_W : \Real^{\cin \times n \times n} \to \Real^{\cout \times n \times n}$.
It is easiest to analyze convolutions when they are \emph{circular}:
if the kernel goes out of bounds of $X$,
it wraps around to the other side---%
this operation can be carried out efficiently in the Fourier domain.
Consequently, we focus on circular convolutions.

We define $\conv_W(X)$ as the circular convolutional layer with weight tensor
$W \in \Real^{\cout \times \cin \times n \times n}$ applied
to an input tensor $X \in \Real^{\cin \times n \times n}$
yielding an output tensor $Y = \conv_W(X) \in \Real^{\cout \times n \times n}$.
Equivalently, we can view $\conv_W(X)$ as the doubly block-circulant matrix
$C \in \Real^{\cout n^2 \times \cin n^2}$
corresponding to the circular convolution with weight tensor $W$ applied to the unrolled
input tensor $\vecc X \in \Real^{\cin n^2 \times 1}$.
Similarly, we denote by $\conv^T_W(X)$ the transpose $C^T$ of the same convolution,
which can be obtained by transposing the first
two channel dimensions of $W$ and flipping each of the last two (kernel) dimensions
vertically and horizontally, calling the result $W'$, and computing $\conv_{W'}(X)$.
We denote $\conv^{-1}_W(X)$ as the inverse of the convolution,
\ie with corresponding matrix $C^{-1}$, which is more difficult to efficiently compute.

Now we review how to perform a convolution in the spatial domain.
We refer to a \emph{pixel} as a $\cin$ or $\cout$-dimensional slice of a tensor, like $X[:,i,j]$.
Each of the $n^2$ $(i, j)$ output pixels $Y[:,i,j]$ are computed as follows:
for each $c \in [\cout]$, compute $Y[c, i, j]$ by centering the tensor
$W[c]$ on the $(i, j)^{th}$ pixel of the input
and taking a dot product, wrapping around pixels of $W$ that go out-of-bounds.
Typically, $W$ is zero except for a $k \times k$
region of the last two (spatial) dimensions, which we call the \emph{kernel} or the \emph{receptive field}.
Typically, convolutional layers have small kernels, \eg $k = 3$.

Considering now matrices instead of tensors, the \emph{Cayley transform}
is a bijection between skew-symmetric matrices $A$ and orthogonal matrices $Q$
without $-1$ eigenvalues:
\begin{equation}
	Q = (I - A) (I + A)^{-1}.
\end{equation}
A matrix is skew-symmetric if $A = -A^T$, and we can
\emph{skew-symmetrize} any square matrix $B$ by computing the skew-symmetric part $A = B - B^T$. 
The Cayley transform of such a skew-symmetric matrix is always orthogonal, which can be
seen by multiplying $Q$ by its transpose and rearranging.

We can also apply the Cayley transform to convolutions,
noting they are also linear transformations that can be represented as
doubly block circulant matrices.
While it is possible to construct the matrix $C$ corresponding to a convolution $\conv_W$
and apply the Cayley transform to it, this is highly inefficient in practice:
Convolutions can be easily skew-symmetrized by computing $\conv_W(X) - \conv^T_W(X)$,
but finding their inverse is challenging;
instead, we manipulate convolutions in the Fourier domain, taking advantage
of the \emph{convolution theorem} and the efficiency of the fast Fourier transform.

According to the \emph{2D convolution theorem}~\citep{jain1989fundamentals},
the circular convolution of two matrices in the Fourier domain is simply
their elementwise product.
We will show that the convolution theorem extends to multi-channel convolutions
of tensors, in which case convolution reduces to a batch of complex matrix-vector products
rather than elementwise products:
inverting these smaller matrices is equivalent to inverting the convolution,
and finding their skew-Hermitian part is equivalent to skew-symmetrizing the convolution,
which allows us to compute the Cayley transform.

We define the 2D Discrete (Fast) Fourier Transform for tensors of order $\geq2$ as a mapping
$\mathsf{FFT}: \mathbb{R}^{m_1 \times \mydots \times m_r \times n \times n} \to \mathbb{C}^{m_1 \times \mydots \times m_r \times n \times n}$
defined by $\mathsf{FFT}(X)[i_1, \mydots, i_r] = F_n X[i_1, \mydots, i_r] F_n$
for $i_l \in 1, \mydots, m_l$ and $l \in 1, \mydots, r$ and $r \geq 0$, 
where $F_n[i, j] = \frac{1}{\sqrt{n}} \exp(\frac{-2 \pi \imath}{n})^{(i-1)(j-1)}$.
That is, we treat all but the last two dimensions as batch dimensions.
We denote $\tilde{X} = \DFT(X)$ for a tensor $X$.

Using the convolution theorem, in the Fourier domain the $c^{th}$ output channel
is the sum of the elementwise products of the $\cin$ input and weight channels:
that is, $\tilde{Y}[c] = \sum_{k=1}^{\cin} \tilde{W}[c, k] \odot \tilde{X}[k]$.
Equivalently, working in the Fourier domain, the $(i, j)^{th}$ pixel of the $c^{th}$
output channel is the dot product of the $(i, j)^{th}$ pixel of the $c^{th}$ weight with
the $(i, j)^{th}$ input pixel: ${\tilde{Y}[c, i, j] = \tilde{W}[c,:,i,j]\cdot\tilde{X}[:,i,j].}$
From this, we can see that the whole $(i, j)^{th}$ Fourier-domain output pixel is the matrix-vector product%
\begin{equation}
	\DFT(\conv_W(X))[:, i, j] = \tilde{W}[:, :, i, j] \tilde{X}[:, i, j].
\end{equation}
This interpretation gives a way to compute the inverse convolution as required for the
Cayley transform, assuming $\cin = \cout$:
\begin{equation}
	\DFT(\iconv_W(X))[:, i, j] = \tilde{W}[:, :, i, j]^{-1} \tilde{X}[:, i, j].
\end{equation}
\renewcommand{\gets}{\coloneqq}
\IncMargin{1em}
\DontPrintSemicolon
\begin{algorithm}[t]
  \SetAlgoLined
	\SetKwInput{KwData}{\hspace*{-0.6em}Input}
  \SetKwInput{KwResult}{\hspace*{-0.6em}Output}
  \KwData{A tensor $X \in \Real^{\cin \times n \times n}$ and 
	convolution weights $W \in \Real^{\cout \times \cin \times n \times n}$, with $\cin = \cout$.}
	\KwResult{A tensor $Y \in \Real^{\cout \times n \times n}$, the orthogonal convolution parameterized by $W$ applied to $X$.\hspace*{-1em}}
	$\tilde{W} \gets \DFT(W) \in \Comp^{\cout \times \cin \times n \times n}, \tilde{X} \gets \DFT(X) \in \Comp^{\cin \times n \times n}$ \\
	\For{all $i, j \in 1, \dots, n$ \tcp*[l]{In parallel}}{
	$\tilde{A}[:,:,i,j] \gets \tilde{W}[:,:,i,j] - \tilde{W}[:,:,i,j]^*$ \\
	$\tilde{Y}[:,i,j]  \gets (I + \tilde{A}[:, :, i, j])^{-1} \tilde{X}[:,i,j]$ \\
	$\tilde{Z}[:,i,j]  \gets \tilde{Y}[:,i,j] - \tilde{A}[:,:,i,j] \tilde{Y}[:, i, j]$ \\
	}
	\Return{$\iDFT(\tilde{Z}).real$}
  \caption{Orthogonal convolution via the Cayley transform.}
  \label{cayleyconv}
\end{algorithm}
\looseness=-1
Given this method to compute inverse convolutions, we can now
parameterize an orthogonal convolution with a skew-symmetric
convolution through the Cayley transform, highlighted in
Algorithm~\ref{cayleyconv}:
In line 1, we use the Fast Fourier Transform on the weight and input tensors.
In line 4, we compute the Fourier domain weights for the skew-symmetric convolution
(the \emph{Fourier representation} is skew-Hermitian, thus the use of the conjugate transpose).
Next, in lines 4--5 we compute the inverses required for $\DFT(\iconv_{I + A}(x))$
and use them to compute the Cayley transform
written as $(I + A)^{-1} - A(I + A)^{-1}$ in line 6.
Finally, we get our spatial domain result with the inverse FFT,
which is always \emph{exactly} real despite working with complex
matrices in the Fourier domain (see Appendix~\ref{apx:method}).

\subsection{Properties of our approach}
It is important to note that \emph{the inverse in the Cayley transform always exists}:
Because $A$ is skew-symmetric, it has all imaginary eigenvalues,
so $I + A$ has all nonzero eigenvalues and is thus nonsingular.
Since only square matrices can be skew-symmetrized and inverted,
Algorithm~\ref{cayleyconv} only works for $\cin=\cout$, but
can be extended to the rectangular case where $\cout \geq \cin$ by
padding the matrix with zeros and then projecting out the first $\cin$ columns
after the transform, resulting in a norm-preserving semi-orthogonal matrix;
the case $\cin \geq \cout$ follows similarly, but the resulting matrix is merely non-expansive.
With efficient implementation in terms of the Schur complement (Appendix~\ref{apx:semiortho}, Eq.~\ref{apx:semieq}), this
only requires inverting a square matrix of order $\min(\cin, \cout)$.

We saw that learning was easier if we parameterized
$W$ in Algorithm~\ref{cayleyconv} by $W = g V / \| V \|_F$
for a learnable scalar $g$ and tensor $V$, as in weight normalization~\citep{salimans2016weight}.

\looseness=-1
\textbf{Comparison to BCOP.}
While the Block Convolution Orthogonal Parameterization (BCOP)
can only express orthogonal convolutions with fixed $k \times k$-sized kernels,
a Cayley convolutional layer can represent orthogonal convolutions
with a learnable kernel size up to the input size,
and it does this without costly projections unlike~\citet{sedghi2018singular}.
However, our parameterization as presented is limited
to orthogonal convolutions without -1 eigenvalues.
Hence, our parameterization is \emph{incomplete};
besides kernel size restrictions, BCOP was also demonstrated to incompletely represent the space of orthogonal convolutions,
though the details of the problem were unresolved~\citep{bcop}.

Our method can represent such orthogonal convolutions by multiplying the Cayley transform
by a fixed diagonal matrix with $\pm 1$ entries~\citep{gallier2006remarks, helfrich2018orthogonal};
however, we cannot optimize over the discrete set of such scaling matrices,
so our method cannot optimize over \emph{all} orthogonal convolutions, nor all special orthogonal convolutions.
In our experiments, we did not find improvements from adding randomly initialized scaling matrices as in \cite{helfrich2018orthogonal}.

\textbf{Limitations of our method.}
As our method requires computing an inverse convolution, it is generally incompatible
with strided convolutions; \eg a convolution with stride 2 cannot be inverted
since it involves noninvertible downsampling.
It is possible to apply our method to stride-2 convolutions by simultaneously
increasing the number of output channels by $4\times$ to compensate for the
$2\times$ downsampling of the two spatial dimensions,
though we found this to be computationally inefficient.
Instead, we use the invertible downsampling layer from \citep{jacobsen2018revnet}
to emulate striding.

The convolution resulting from our method is \emph{circular},
which is the same as using the circular padding mode instead
of zero padding in, \eg PyTorch, and will not have a large impact on performance
if subjects tend to be centered in images in the data set.
BCOP \citep{bcop} and \cite{sedghi2018singular} also restricted their attention to circular convolutions.

Our method is substantially more expensive than plain convolutional layers,
though in most practical settings it is more efficient than existing work:
We plot the runtimes of our Cayley layer, BCOP, and plain convolutions
in a variety of settings in Figure~\ref{runtime-graphs} for comparison,
and we also report runtimes in Tables~\ref{runtimes} and \ref{runtimes-resnet}
(see Appendix~\ref{appendix-runtimes}).

\textbf{Runtime comparison}
Our Cayley layer does $\cin \cout$ FFTs on $n \times n$ matrices
(\ie the kernels padded to the input size),
and $\cin$ FFTs for each $n \times n$ input.
These have complexity $\mathcal{O}(\cin \cout n^2 \log n)$ 
and $\mathcal{O}(\cout n^2 \log n)$
respectively. The most expensive step is computing
the inverse of $n^2$ square matrices of order $c = \min(\cin, \cout)$,
with complexity $\mathcal{O}(n^2 c^3)$,
similarly to the method of \cite{sedghi2018singular}.
We note like the authors that
parallelization could effectively make this $\mathcal{O}(n^2 \log n + c^3)$,
and it is quite feasible in practice.
As in \cite{li2020efficient}, the inverse could be replaced
with an iterative approximation, but
we did not find it necessary for our relatively small architectures.

For comparison, the related layers BCOP and RKO (Sec. \ref{other-layers})
take only $\mathcal{O}(c^3)$ to orthogonalize the convolution,
and OSSN takes $\mathcal{O}(n^2 c^3)$~\citep{bcop}.
In practice, we found our Cayley layer takes anywhere
from $1/2\times$ to $4\times$ as long as BCOP, depending on the architecture
(see Appendix~\ref{appendix-runtimes}).

\section{Experiments}
\label{other-layers}
\looseness=-1
Our experiments have two goals:
First, we show that our layer remains orthogonal in practice.
Second, we compare the performance of our layer versus alternatives (particularly BCOP)
on two adversarial robustness tasks on CIFAR-10:
We investigate the \emph{certifiable} robustness against an $\ell_2$-norm-bounded adversary using the idea
of Lipschitz Margin Training~\citep{tsuzuku2018lipschitz},
and then we look at robustness in practice against a powerful adversary.
We find that our layer is always orthogonal and performs relatively well in the robustness tasks.
Separately, we show our layer improves on the Wasserstein distance estimation task from \cite{bcop}
in Appendix~\ref{appendix-wasserstein}.

For alternative layers, we adopt the naming scheme for previous work on Lipschitz-constrained
convolutions from~\cite{bcop}, and
we compare directly against their implementations.
We outline the methods below.

\noindent
\textbf{RKO.}
A convolution can be represented as a matrix-vector product,
\eg using a doubly block-circulant matrix and the unrolled input.
Alternatively, one could stack each $k \times k$ receptive field,
and multiply by the $\cout \times k^2 \cin$
reshaped kernel matrix~\citep{cisse2017parseval}.
The spectral norm of this reshaped matrix is bounded by
the convolution's true spectral norm~\citep{tsuzuku2018lipschitz}.
Consequently, \emph{reshaped kernel methods}
orthogonalize this reshaped matrix, upper-bounding the singular values
of the convolution by 1.
\cite{cisse2017parseval} created a penalty term based on this matrix;
instead, like~\cite{bcop}, we orthogonalize the reshaped matrix directly,
called \textbf{reshaped kernel orthogonalization (RKO)}.
They used an iterative algorithm
for orthogonalization~\citep{bjorck1971iterative};
for comparison, we implement \textbf{RKO}
using the Cayley transform instead of \bjorck~orthogonalization, called \textbf{CRKO}.

\noindent
\textbf{OSSN.} A prevalent idea to constrain the Lipschitz constants
of convolutions is to approximate the maximum singular value and normalize it out:
\cite{miyato2018spectral} used the power method on the matrix
$W$ associated with the convolution, \ie $s_{i+1} \coloneqq W^T W s_i$, and
$\sigma_{max} \approx \| W s_n \| / \| s_n \|$.
\cite{gouk2018regularisation} improved upon this idea by applying the power method
directly to convolutions, using the transposed convolution for $W^T$.
However, this \textbf{one-sided \emph{spectral normalization}} is quite restrictive;
dividing out $\sigma_{max}$ can make other singular values vanishingly small.

\noindent
\textbf{SVCM.}
\cite{sedghi2018singular} showed how to exactly compute the singular values of convolutional layers
using the Fourier transform before the SVD, and proposed a
\textbf{singular value clipping method}.
However, the clipped convolution can have an arbitrarily large kernel size, so
they resorted to alternating projections between 
orthogonal convolutions and $k \times k$-kernel convolutions,
which can be expensive.
Like \cite{bcop}, we found that $\approx 50$ projections are needed for orthogonalization.

\noindent
\textbf{BCOP.}
The Block Convolution Orthogonal Parameterization extends the orthogonal initialization
method of \cite{xiao2018dynamical}.
It differentiably parameterizes $k\times k$ orthogonal convolutions with an orthogonal
matrix and $2(k - 1)$ symmetric projection matrices.
The method only parameterizes the subspace of orthogonal convolutions
with $k \times k$-sized kernels, but is quite expressive empirically.
Internally, orthogonalization is done with the method by~\citet{bjorck1971iterative}.

Note that BCOP and SVCM are the only other orthogonal convolutional layers,
and SVCM only for a large number of projections.
RKO, CRKO, and OSSN merely upper-bound the Lipschitz constant of the layer by 1.

\subsection{Training and Architectural Details}

\textbf{Training details.}
\noindent
For all experiments, we used CIFAR-10 with standard augmentation,
\ie random cropping and flipping.
Inputs to the model are always in the range $[0, 1]$;
we implement normalization as a layer for compatibility with AutoAttack.
For each architecture/convolution pair, we tried learning rates
in $\{10^{-5}, 10^{-4}, 10^{-3}, 10^{-2}, 10^{-1}\}$, choosing the one with the best
test accuracy. Most often, $0.001$ is appropriate.
We found that a piecewise triangular learning rate, as used in top performers in the
DAWNBench competition~\citep{dawnbench}, performed best.
Adam~\citep{kingma2014adam} showed a significant improvement over plain SGD,
and we used it for all experiments.

\textbf{Loss function.}
Inspired by~\cite{tsuzuku2018lipschitz}, \cite{anil2019sorting} and \cite{bcop}
used multi-class hinge loss where the margin is the robustness certificate $\sqrt{2}L \eps_0$.
We corroborate their finding that this works better than cross-entropy,
and similarly use $\eps_0 = 0.5$.
Varying $\eps_{0}$ controls a tradeoff between accuracy and robustness (see Fig.~\ref{kw-eps}).

\textbf{Initialization.}
We found that the standard uniform initialization in PyTorch performed
well for our layer. We adjusted the variance, but significant differences
required order-of-magnitude changes.
For residual networks, we tried Fixup initialization~\citep{zhang2019fixup}, but saw no significant improvement.
We hypothesize this is due to (1) the learnable scaling parameter
inside the Cayley transform, which changes significantly during training
and (2) the dynamical isometry inherent with orthogonal layers.
For alternative layers, we used the initializations from~\cite{bcop}.

\textbf{Architecture considerations.}
For fair comparison with previous work, we use the ``large'' network from~\cite{bcop},
which was first implemented in \cite{kolter2017provable}'s work on certifiable robustness.
We also compare the performance of the different layers in
a 1-Lipschitz-constrained version of ResNet9~\citep{he2016deep}
and WideResNet10-10~\citep{zagoruyko2016wide}.
The architectures we could investigate were limited by compute and memory,
as all the layers compared are relatively expensive.
For RKO, OSSN, SVCM, and BCOP, we use
\bjorck~orthogonalization~\citep{bjorck1971iterative} for fully-connected layers, as reported in~\cite{bcop, anil2019sorting}.
For our Cayley convolutional layer and CRKO, we orthogonalize the fully-connected layers with the Cayley transform
to be consistent with our method.
We found the gradient-norm-preserving $\mathsf{GroupSort}$ activation function
from~\cite{anil2019sorting} to be more effective than ReLU, and we used a group size of 2, \ie $\mathsf{MaxMin}$.

\textbf{Strided convolutions.}
For the KWLarge network,
we used ``invertible downsampling'',
which emulates striding by
rearranging the inputs to have $4\times$ more channels
while halving the two spatial dimensions and reducing the kernel size to
$\lfloor k/2 \rfloor$~\citep{jacobsen2018revnet}.
For the residual networks, we simply used a version of pooling,
noting that average pooling is still non-expansive when multiplied by its kernel size,
which allows us to use more of the network's capacity.
We also halved the kernel size of the last pooling layer,
instead adding another fully-connected layer; empirically, this
resulted in higher local Lipschitz constants.

\looseness=-1
\textbf{Ensuring Lipschitz constraints.} Batch normalization layers scale their output,
so they can't be included in our 1-Lipschitz-constrained architecture;
the gradient-norm-preserving properties of our layers compensate for this.
We ensure residual connections are non-expansive by making them
a convex combination with a new learnable parameter $\alpha$, \ie
$g(x) = \alpha f(x) + (1 - \alpha) x$, for $\alpha \in [0, 1]$.
To ensure the latter constraint, we use $\mathsf{sigmoid}(\alpha)$.
We can tune the overall Lipschitz bound to a given $L$
using the Lipschitz composition property, multiplying each of the $m$ layers by $L^{1/m}$.

\begin{figure}
	\centering
	\subfigure[]{\includegraphics[height=2.80cm,trim={0     0 0.4cm 0},clip]{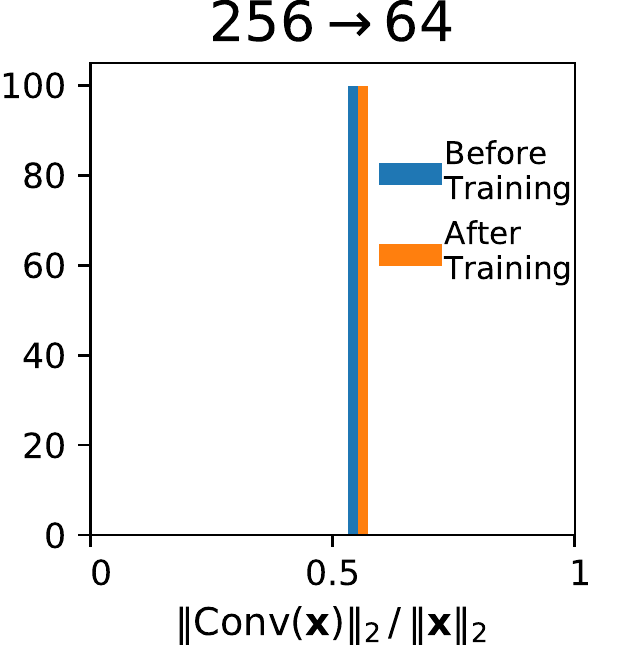}}\hfill
	\subfigure[]{\includegraphics[height=2.80cm,trim={0.8cm 0 0.9cm 0},clip]{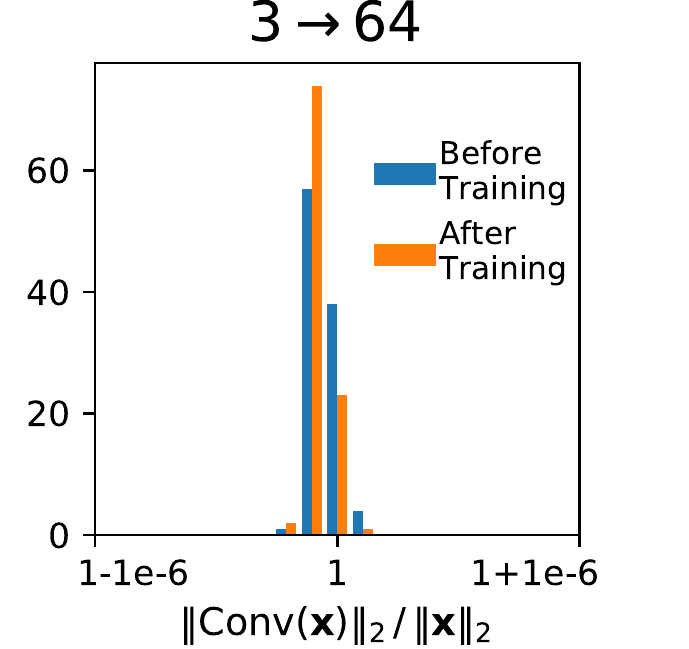}}\hfill
	\subfigure[]{\includegraphics[height=2.80cm,trim={0.8cm 0 0.9cm 0},clip]{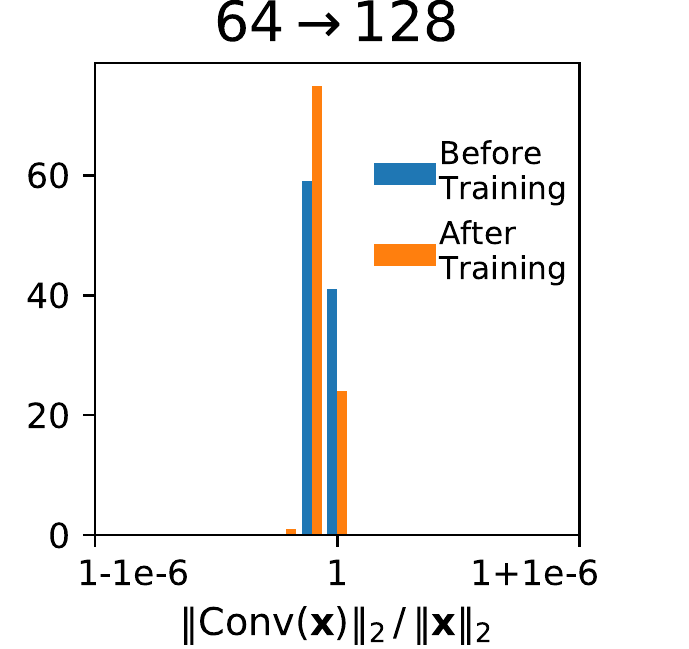}}\hfill
	\subfigure[]{\includegraphics[height=2.80cm,trim={0.8cm 0 0.9cm 0},clip]{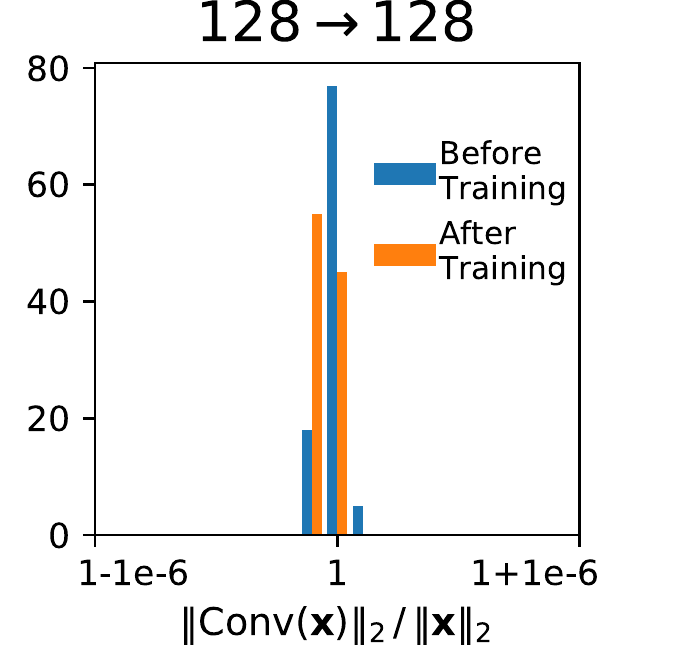}}\hfill
	\subfigure[]{\includegraphics[height=2.80cm,trim={0.8cm 0 0.9cm 0},clip]{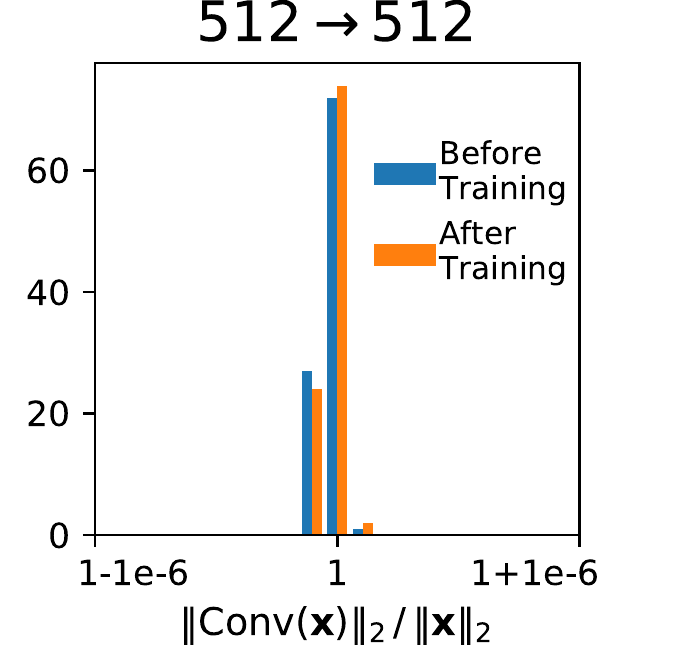}}%
	\vspace{-0.33cm}
	\caption{(Titles: $\cin \rightarrow \cout$). Our layer remains orthogonal in practice even for large convolutions (d, e), and is norm-preserving even when
	\label{practical-ortho}
	$c_{\mathsf{out}} > c_{\mathsf{in}}$ (b, c); it is nonexpansive when $c_{\mathsf{in}} > c_{\mathsf{out}}$ (a). }
\end{figure}

\afterpage{%
\begin{table}[h]
\begin{center}

\begin{tabular*}{\textwidth}{ @{\extracolsep{\fill}} c @{\extracolsep{\fill}}| @{\extracolsep{\fill}} l  @{\extracolsep{\fill}} | @{\extracolsep{\fill}} c @{\extracolsep{\fill}} c @{\extracolsep{\fill}}c @{\extracolsep{\fill}}c @{\extracolsep{\fill}}c @{\extracolsep{\fill}}c | @{\extracolsep{\fill}} c @{\extracolsep{\fill}}}
\toprule
	\multicolumn{1}{c}{} & \multicolumn{1}{c}{} & \multicolumn{7}{c}{\textbf{KWLarge}: Trained for \emph{provable} robustness} \\
	\cmidrule{3-9}
\addlinespace[10pt]
	{$\eps$} & Test Acc.  & {Cayley} & {BCOP} & {RKO} & {CRKO} & {OSSN} & {SVCM} & {\hspace{-0.9em}$.85\cdot$Cayley\hspace{-0.9em}} \\
\midrule
$0$ 
& Clean            & \best{75.33$\PM$.41} & 75.11$\PM$.37 & 74.47$\PM$.28 & 73.92$\PM$.27 & 71.69$\PM$.34 & 72.43$\PM$.84 & 74.35$\PM$.33 \\
\midrule
\multirow{4}{*}{$\frac{36}{255}$}
& PGD              & 67.66$\PM$.31        & 67.29$\PM$.35 & \best{68.32$\PM$.22} & 68.03$\PM$.28 & 65.13$\PM$.10 & 66.43$\PM$.62 & 67.29$\PM$.52 \\
&\small AutoAttack& 65.13$\PM$.48        & 64.62$\PM$.31 & \best{66.10$\PM$.26} & 65.95$\PM$.25 & 62.92$\PM$.16 & 64.27$\PM$.67 & 65.00$\PM$.58 \\
&\emph{Certified}  & \best{59.16$\PM$.36} & 58.29$\PM$.19 & 57.50$\PM$.17 & 57.48$\PM$.34 & 55.71$\PM$.57 & 52.11$\PM$.90 & 59.99$\PM$.40 \\
&Emp.$\mathsf{Lip}$& 0.740$\PM$.01        & 0.740$\PM$.02 & 0.667$\PM$.01 & 0.668$\PM$.01 & 0.716$\PM$.01 & 0.570$\PM$.02 & 0.648$\PM$.01 \\
\bottomrule
\end{tabular*}
\caption{Trained without normalizing inputs, mean/s.d. from 5 experiments reported.
	Our Cayley layer outperforms other methods in both test and $\ell_2$ \textbf{certifiable robust accuracy.}}
\label{pkw-table}
\end{center}
\end{table}

\vspace*{-0.3em}
\begin{figure}[h]
	\centering
	\subfigure{\includegraphics[height=2.80cm,trim={0 0 2.5cm 0},clip]{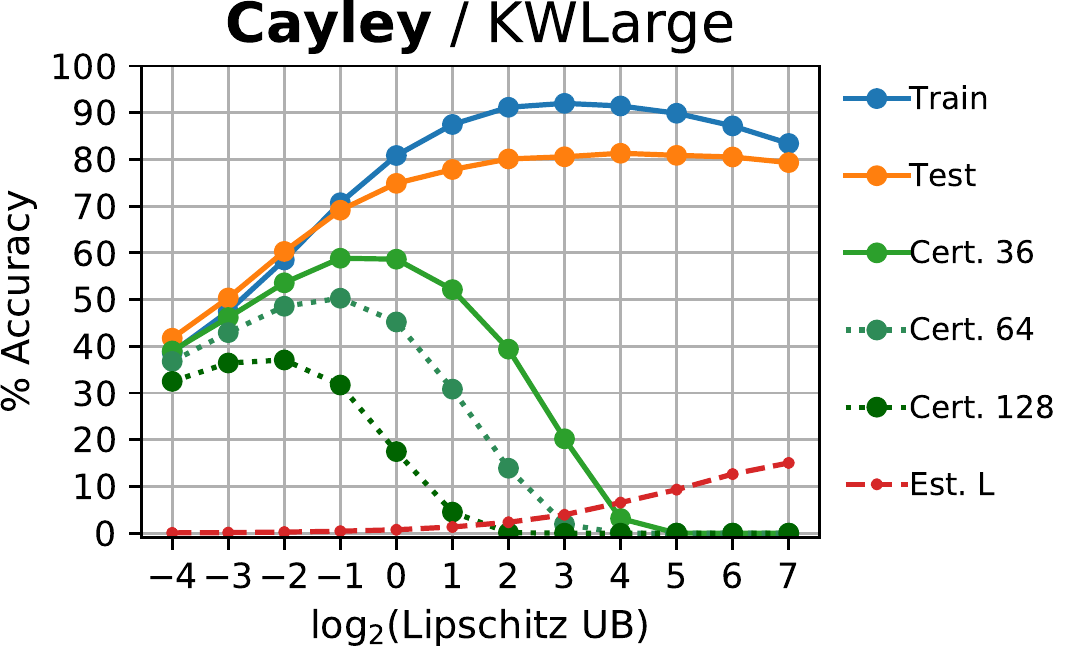}}
	\subfigure{\includegraphics[height=2.80cm,trim={1.422cm 0 2.5cm 0},clip]{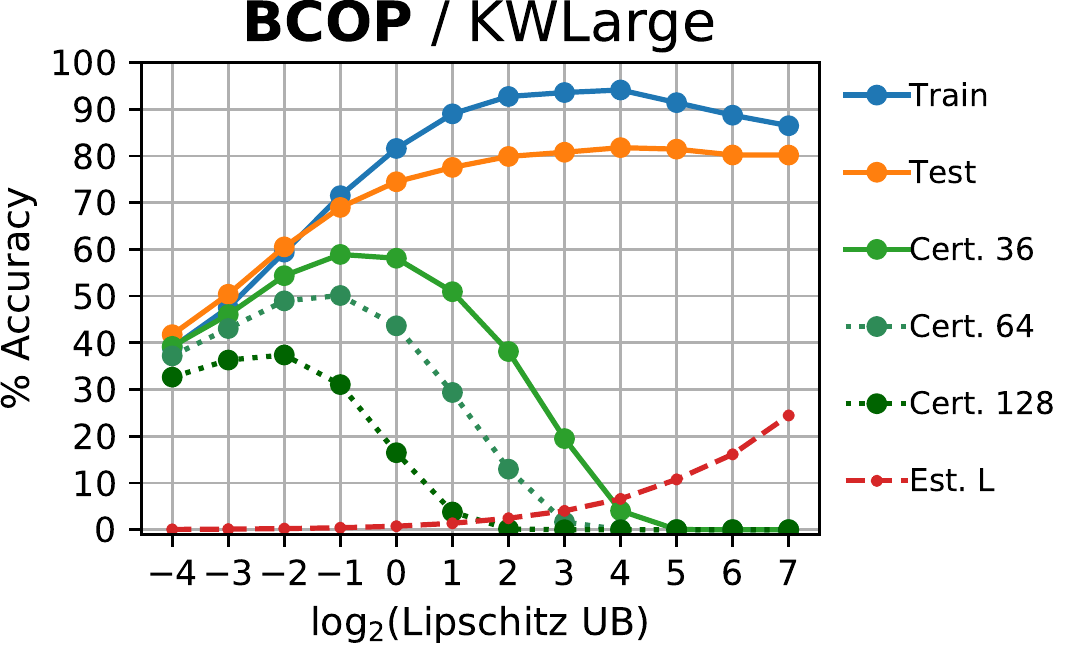}}
	\subfigure{\includegraphics[height=2.80cm,trim={1.422cm 0 2.5cm 0},clip]{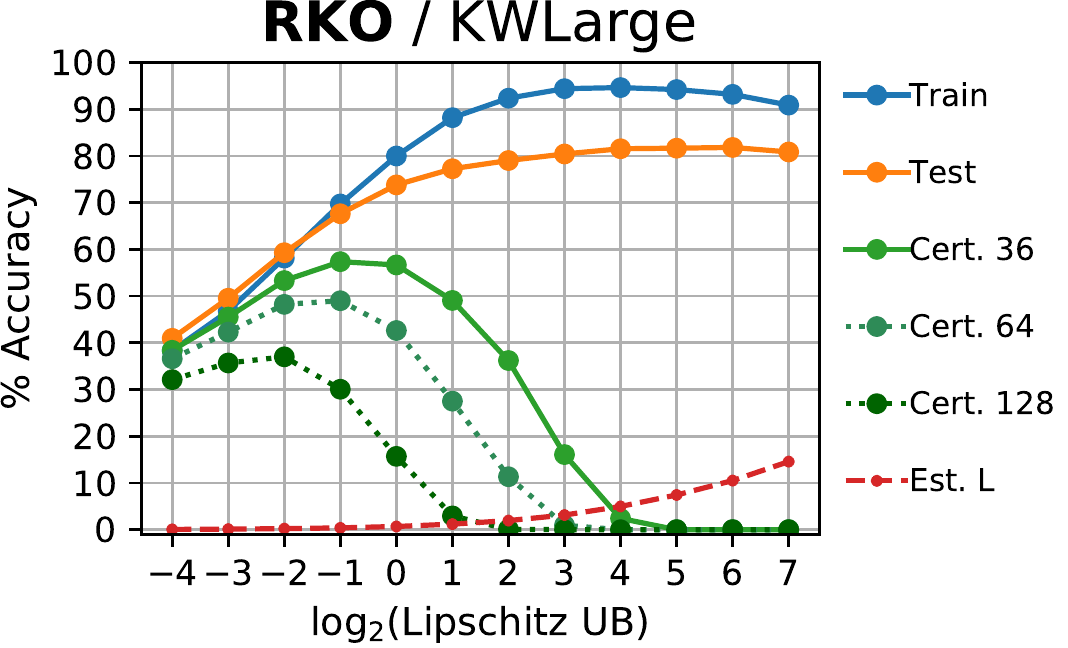}}
	\subfigure{\includegraphics[height=2.80cm,trim={1.422cm 0 0 0},clip]{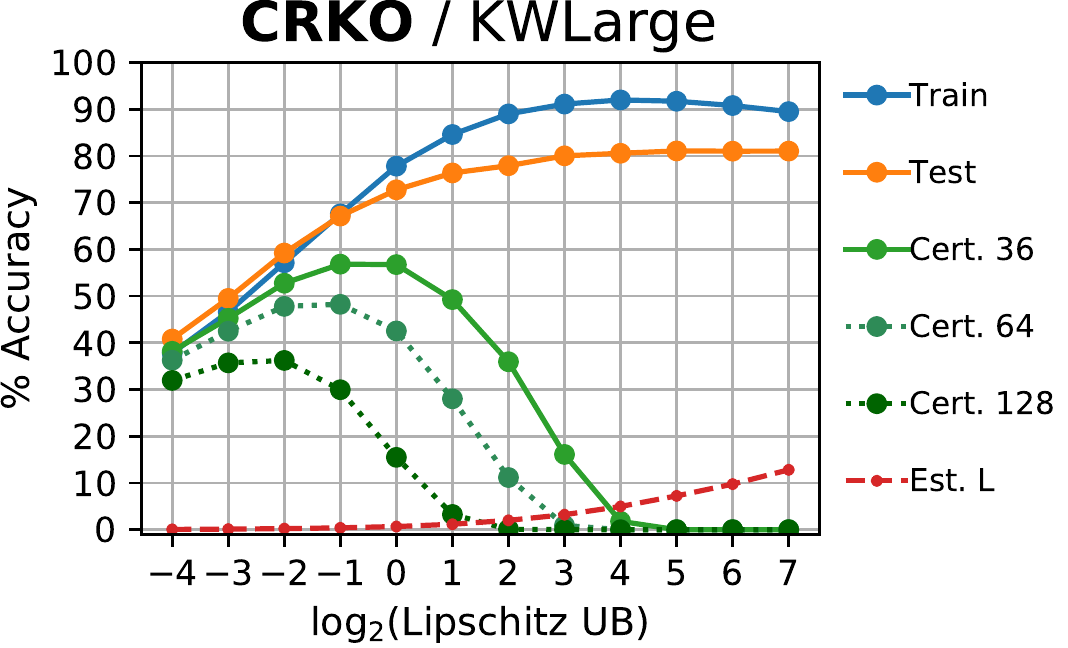}}
	\caption{The \textbf{provable robustness} \emph{vs.} clean accuracy tradeoff enabled by scaling the Lipschitz upper-bound for KWLarge.}
	\label{pkw-scales}
\end{figure}
}

\subsection{Adversarial Robustness}

For certifiable robustness,
we report the fraction of certifiable test points: \ie those with classification margin
$\mathcal{M}_f(\vx)$ greater than $\sqrt{2} L \eps$, where $\eps = 36/255$.
For empirical defense, we use both vanilla projected gradient descent and AutoAttack by~\cite{croce2020reliable}.
For PGD, we use $\alpha = \eps / 4.0$ with 10 iterations.
Within AutoAttack, we use both APGD-CE and APGD-DLR,
finding the decision-based attacks provided no improvements.
We report on $\eps = 36/255$ for consistency with \cite{bcop}
and previous work on deterministic certifiable robustness~\citep{scaling}.
Additionally, we found it useful to report on empirical local Lipschitz constants
throughout training using the PGD-like method from~\cite{yang2020closer}.

\subsection{Results}

\textbf{Practical orthogonality.} 
We show that our layer remains very close to orthogonality in practice, both before and after learning,
when implemented in 32-bit precision.
We investigated Cayley layers from one of our ResNet9
architectures, running them on random tensors to see
if their norm is preserved, which is equivalent to orthogonality.
We found that $\|\mathsf{Conv}(x) \| / \| x \|$, the extent to 
which our layer is gradient norm preserving, is always extremely close to 1.
We illustrate the small discrepancies, easily bounded between
$0.99999$ and $1.00001$, in Figure~\ref{practical-ortho}.
Cayley layers which do not change or increase the number of channels are
guaranteed to be orthogonal, which we see in practice for graphs (b, c, d, e).
Those which decrease the number of channels can only be non-expansive,
and in fact the layer seems to become slightly more norm-preserving after training (a).
In short, our Cayley layer can capture the full benefits of orthogonality.

\textbf{Certifiable robustness.}
We use our layer and alternatives within the KWLarge architecture for a more direct
comparison to previous work on \emph{deterministic} certifable robustness~\citep{bcop, scaling}.
As in \citep{bcop}, we got the best performance without normalizing inputs, and can thus
say that all networks compared here are at most 1-Lipschitz.

Our layer outperforms BCOP on this task (see Table~\ref{pkw-table}),
and is thus state-of-the-art,
getting on average $75.33\%$ clean test accuracy
and $59.16\%$ certifiable robust accuracy against adversarial perturbations
with norm less than $\eps = 36/255$. In contrast, BCOP
gets $75.11\%$ test accuracy and $58.29\%$ certifiable robust accuracy.
The reshaped kernel methods perform only a percent or two
worse on this task, while the spectral normalization and clipping methods lag behind.

\begin{table}[htp!]
\begin{center}
\begin{tabular*}{\textwidth}{c@{\extracolsep{\fill}}|@{\extracolsep{\fill}}l @{\extracolsep{\fill}} |@{\extracolsep{\fill}}c@{\extracolsep{\fill}}c@{\extracolsep{\fill}}c@{\extracolsep{\fill}}c|c@{\extracolsep{\fill}}c@{\extracolsep{\fill}}c@{\extracolsep{\fill}}c@{\extracolsep{\fill}}}
\toprule
\multicolumn{1}{c}{} & \multicolumn{1}{c}{} & \multicolumn{4}{c}{\bf ResNet9} & \multicolumn{4}{c}{\bf WideResNet10-10} \\
\cmidrule{3-6} \cmidrule{7-10}
\addlinespace[10pt]
{$\eps$} & Test Acc. & {Cayley}      & {BCOP}        & {RKO}         & {CRKO}              & {Cayley} & {BCOP} & {RKO} & {CRKO}\\ 
\midrule
	$0$                      
& Clean      			& \best{81.70$\PM$.12} & 80.72$\PM$.18 & 80.06$\PM$.15 & 79.38$\PM$.18       & \best{82.99} & 81.39 & 81.50 & 78.81 \\
\midrule
\multirow{2}{*}{$\frac{36}{255}$}
& PGD                   & \best{73.77$\PM$.19} & 73.27$\PM$.18 & 73.37$\PM$.12 & 72.52$\PM$.11       & \best{76.02} & 74.56 & 74.72 & 72.28 \\
& \small AutoAttack     & \best{71.17$\PM$.20} & 70.50$\PM$.06 & 71.01$\PM$.13 & 70.10$\PM$.07       & \best{73.16} & 71.86 & 72.24 & 69.97 \\

\bottomrule
\end{tabular*}
\caption{Empirical adversarial robustness for residual networks, mean and standard deviation for ResNet9 from 3 experiments.
	Cayley layers perform competitively on clean and robust accuracy.\vspace*{-1em}}
\label{rn-table}
\end{center}
\end{table}

We assumed that a layer is only meaningfully better than the other if both
the test and robust accuracy are improved; otherwise, the methods may simply occupy
different parts of the tradeoff curve.
Since reshaped kernel methods can encourage smaller Lipschitz constants than orthogonal layers~\citep{sedghi2018singular},
we investigated the clean vs. certifiable robust accuracy tradeoff enabled by scaling
the Lipschitz upper bound of the network, visualized in Figure~\ref{pkw-scales}.
To that end, in light of the competitiveness of RKO, we chose a Lipschitz upper-bound of $0.85$ which gave
our Cayley layer similar test accuracy; this allowed for even higher certifiable robustness of $59.99\%$,
but lower test accuracy of $74.35\%$.
Overall, we were surprised by the similarity between the four top-performing methods after scaling Lipschitz constants.

We were not able to improve certifiable accuracy with ResNets.
However, it was useful to increase the kernel size:
we found 5 was an improvement in accuracy, while 7 and 9 were slightly worse.
(Since our method operates in the Fourier domain, increases in kernel size incur no extra cost.)
We also saw an improvement from scaling up the width of each layer of KWLarge,
and our Cayley layer was substantially faster than BCOP as the width of KWLarge increased
(see Appendix~\ref{appendix-runtimes}).
Multiplying the width by 3 and increasing the kernel size to 5,
we were able to get $61.13\%$ certified robust accuracy with our layer, and $60.55\%$ with BCOP.

\looseness=-1
\textbf{Empirical robustness.}
Previous work has shown that adversarial 
robustness correlates with lower Lipschitz constants.
Thus, we investigated the robustness endowed by our layer against $\ell_2$ gradient-based adversaries.
Here, we got better accuracy with the standard practice of normalizing inputs.
Our layer outperformed the others in ResNet9 and WideResNet10-10
architectures; results were less decisive for KWLarge (see Appendix~\ref{appendix-kw}).
For the WideResNet, we got $82.99\%$ clean accuracy and $73.16\%$ robust accuracy for $\eps = 36/255$.
For comparison, the state-of-the-art achieves $91.08\%$ clean accuracy and
$72.91\%$ robust accuracy for $\eps = 0.5$ using a ResNet50 with adversarial training
and additional unlabeled data~\citep{augustin2020adversarial}.
We visualize the tradeoffs for our residual networks in Figure~\ref{rn9-scales},
noting that they empirically have smaller local Lipschitz constants than KWLarge.
While our layer outperforms others for the default Lipschitz bound of 1,
and is consistently slightly better than BCOP,
RKO can perform similarly well for larger bounds.
This provides some support for studies showing that hard constraints like ours
may not match the performance of softer constraints, such as RKO
and penalty terms~\citep{bansal2018can, vorontsov2017orthogonality}.

\section{Conclusion}
In this paper, we presented a new, expressive parameterization of orthogonal convolutions using the Cayley transform.
Unlike previous approaches to Lipschitz-constrained convolutions, ours gives deep networks the full
benefits of orthogonality, such as gradient norm preservation.
We showed empirically that our method indeed maintains a high degree of orthogonality
both before and after learning,
and also scales better to some architectures than previous approaches.
Using our layer, we were able to improve upon the state-of-the-art in \emph{deterministic}
certifiable robustness against an $\ell_2$-norm-bounded adversary,
and also showed that it endows networks with considerable inherent robustness empirically.
While our layer offers benefits theoretically,
we observed that heuristics involving orthogonalizing reshaped kernels were also
quite effective for empirical robustness.
Orthogonal convolutions may only show their true advantage in gradient norm preservation
for deeper networks than we investigated.
In light of our experiments in scaling the Lipschitz bound, we hypothesize that not orthogonality,
but insead the ability of layers such as ours to exert control over the Lipschitz constant,
may be best for the robustness/accuracy tradeoff.
Future work may avoid expensive inverses using approximations or the exponential map,
or compare various orthogonal and Lipschitz-constrained layers in the context of very deep networks.
\subsubsection*{Acknowledgments}
We thank Shaojie Bai, Chun Kai Ling, Eric Wong, and the anonymous reviewers for helpful feedback and discussions.
This work was partially supported under DARPA grant number HR00112020006.

\bibliography{iclr2021_conference}
\bibliographystyle{iclr2021_conference}

\newpage
\appendix

\renewcommand{\theequation}{A\arabic{equation}}
\setcounter{equation}{0}

\section{Orthogonalizing Convolutions in the Fourier Domain}
\label{apx:method}

Our method relies on the fact that a multi-channel circular convolution
can be \emph{block-diagonalized} by a suitable Discrete Fourier Transform matrix.
We show how this follows from the 2D convolution theorem~\citep[p.~145]{jain1989fundamentals} below.

\begin{definition}
$F_n$ is the DFT matrix for sequences of length $n$;
we drop the subscript when it can be inferred from context.
\end{definition}

\begin{definition}
We define $\conv_W(X)$ as in Section~\ref{sec:method};
if $\cin = \cout = 1$,
we drop the channel axes, {\emph i.e.,}
for $X, W \in \Real^{n \times n}$, the 2D circular convolution
of $X$ with $W$ is $\conv_W(X) \in \Real^{n \times n}$.
\end{definition}

\begin{theorem}
\label{easydiag}
If $C \in \Real^{n^2 \times n^2}$ represents a 2D circular convolution
with weights $W \in \Real^{n \times n}$
operating on a vectorized input $\vecc(X) \in \Real^{n^2 \times 1}$, with $X \in \Real^{n \times n}$,
then it can be diagonalized as $$(F \kron F) C (F^* \kron F^*) = D.$$
\end{theorem}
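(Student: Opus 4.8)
The plan is to read the diagonalization straight off the 2D convolution theorem (which the paper already cites, Jain 1989, p.~145) together with the standard vectorization identity $\vecc(AXB) = (B^{T}\kron A)\vecc(X)$. First I would record three elementary facts about $F = F_n$. It is symmetric, $F^{T}=F$, since $F[i,j]$ depends only on the product $(i-1)(j-1)$; it is unitary, $F^{*}F=FF^{*}=I$, by the usual geometric-sum computation for $n$-th roots of unity, so $F^{-1}=F^{*}$ and hence $(F\kron F)^{-1}=F^{*}\kron F^{*}$; and, combining symmetry with the vectorization identity, the 2D transform $\DFT(X)=FXF$ satisfies $\vecc(\DFT(X))=(F\kron F)\vecc(X)$. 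In other words, $F\kron F$ is precisely the matrix that implements the 2D DFT on vectorized images.

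Next I would invoke the 2D convolution theorem in the single-channel case: for every $X$, $\DFT(\conv_W(X)) = \gamma_n\,\tilde W'\odot\tilde X$, where $\odot$ is the Hadamard product, $\tilde X=\DFT(X)$, $\tilde W'=\DFT(W')$ with $W'$ the kernel $W$ with its two spatial axes reversed (the flip that distinguishes convolution from correlation; harmless here), and $\gamma_n$ is the scalar fixed by the $1/\sqrt{n}$ normalization of $F_n$. Vectorizing this identity and using $\vecc(P\odot Q)=\diag(\vecc P)\,\vecc Q$ converts it to $(F\kron F)\,\vecc(\conv_W(X)) = \gamma_n\diag(\vecc\tilde W')\,(F\kron F)\,\vecc(X)$.

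Finally I would combine the pieces. By definition $C$ represents the convolution, so $\vecc(\conv_W(X))=C\,\vecc(X)$, and the previous display becomes $(F\kron F)\,C\,\vecc(X) = \gamma_n\diag(\vecc\tilde W')\,(F\kron F)\,\vecc(X)$ for all $X\in\Real^{n\times n}$. Since $\vecc(X)$ ranges over all of $\Real^{n^{2}}$, the two linear maps agree, so $(F\kron F)\,C = \gamma_n\diag(\vecc\tilde W')\,(F\kron F)$; right-multiplying by $(F\kron F)^{-1}=F^{*}\kron F^{*}$ gives $(F\kron F)\,C\,(F^{*}\kron F^{*}) = D$ with $D := \gamma_n\diag(\vecc\tilde W')$ diagonal, which is the claim.

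The only place needing care — and the step I would be most deliberate about — is the bookkeeping: getting the Kronecker/vectorization identity right (this is exactly where $F^{T}=F$ is used), tracking the normalization scalar $\gamma_n$ consistently, and confirming that the ``holds for all $X$, hence equate the matrices'' move is legitimate (it is, because the $\vecc(X)$ span $\Real^{n^{2}}$). A self-contained alternative that avoids citing the convolution theorem is the structural argument: $C$ is doubly block-circulant, so $F\kron I$ block-diagonalizes its outer circulant block structure and $F$ applied within each resulting block (i.e.\ $I\kron F$) diagonalizes each circulant block, and composing gives $(F\kron F)C(F^{*}\kron F^{*})$ diagonal. I would prefer the first route, since the paper already treats the convolution theorem as known and since the same computation is exactly what generalizes to the multi-channel (block-diagonal) case.
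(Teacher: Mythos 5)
Your proposal is correct and follows essentially the same route as the paper's proof: invoke the 2D convolution theorem, vectorize the Hadamard product via $\vecc(P\odot Q)=\diag(\vecc P)\vecc Q$ and $\vecc(ABC)=(C^T\kron A)\vecc(B)$ with $F^T=F$, then equate linear maps using $C\vecc(X)=\vecc(\conv_W(X))$. Your extra bookkeeping on the kernel flip and the normalization scalar $\gamma_n$ is more careful than the paper (which absorbs these into its statement of the convolution theorem), but it affects only the explicit form of $D$, not the diagonalization claim.
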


\begin{proof}
According to the 2D convolution theorem,
we can implement a single-channel 2D circular convolution by computing the elementwise product
of the DFT of the filter and input signals:
\begin{equation}
	FWF \odot FXF = F\conv_W(X)F.
\end{equation}
This elementwise product is easier to work mathematically with if we represent it as a diagonal-matrix-vector product
after vectorizing the equation:
\begin{equation}
	\diag(\vecc(FWF)) \vecc(FXF) = \vecc(F \conv_W(X) F ).
\end{equation}
We can then rearrange this using $\vecc(ABC) = (C^T \kron A) \vecc(B)$ and the symmetry of $F$:
\begin{equation}
	\diag(\vecc(FWF)) (F \kron F) \vecc(X) = (F \kron F) \vecc(\conv_W(X)).
\end{equation}
Left-multiplying by the inverse of $F \kron F$ and noting $C\vecc(X) = \vecc(\conv_W(X))$,
we get the result
\begin{alignat}{2}
	&&  (F^* \kron F^*) \diag(\vecc(FWF)) (F \kron F) &= C \notag \\
	\Rightarrow&& \diag(\vecc(FWF))  &= (F \kron F)C (F^* \kron F^*),
\end{alignat}
which shows that the (doubly-block-ciculant) matrix $C$ is diagonalized by $F \otimes F$.
An alternate proof can be found in~\citet[p.~150]{jain1989fundamentals}.
\end{proof}

Now we can consider the case where we have a 2D circular convolution $\mathcal{C} \in \Real^{\cout n^2 \times \cin n^2}$ with
$\cin$ input channels and $\cout$ output channels. Here, $\mathcal{C}$ has $\cout \times \cin$ blocks,
each of which is a circular convolution $\cC_{ij} \in \Real^{n^2 \times n^2}$. 
The input image is $\vecc \mathcal{X} = \left[\vecc^T X_1, \dots, \vecc^T X_{\cin}\right]^T \in \Real^{\cin n^2 \times 1}$, where $X_i$ is the $i^{th}$ channel of $\mathcal{X}$.

\begin{corollary}
\label{multidiag}
If $\mathcal{C} \in \Real^{\cout n^2 \times \cin n^2}$ represents a 2D circular convolution with $\cin$ input channels
and $\cout$ output channels, then it can be block diagonalized as 
$\cF_{\cout} \mathcal{C} \cF_{\cin}^* = \mathcal{D}$,
where $\cF_c = S_{c, n^2} \left(I_c \kron (F \kron F)\right)$,
$S_{c, n^2}$ is a permutation matrix, $I_k$ is the identity matrix of order $k$, and $\mathcal{D}$ is block diagonal
with $n^2$ blocks of size $\cout \times \cin$.
\end{corollary}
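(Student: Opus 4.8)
The plan is to reduce to the single-channel result (Theorem~\ref{easydiag}) one block at a time, and then rearrange the resulting ``block matrix of diagonal matrices'' into a ``block-diagonal matrix of small dense blocks'' using a perfect-shuffle permutation; this permutation is precisely the $S_{c,n^2}$ appearing in the statement.

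First I would fix the block structure. With the channel-major ordering $\vecc\mathcal{X} = [\vecc^T X_1, \dots, \vecc^T X_{\cin}]^T$, the matrix $\mathcal{C}$ is a $\cout \times \cin$ array of blocks $\cC_{ij} \in \Real^{n^2\times n^2}$, where $\cC_{ij}$ is exactly the doubly block-circulant matrix of the single-channel circular convolution with kernel $W[i,j]$; this is just the statement that the $i$-th output channel equals $\sum_j \cC_{ij}\vecc X_j$. Applying Theorem~\ref{easydiag} to each block gives $(F\kron F)\,\cC_{ij}\,(F^*\kron F^*) = D_{ij}$ with $D_{ij}$ diagonal of order $n^2$. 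Since $I_{\cout}\kron(F\kron F)$ is block-diagonal with $\cout$ copies of $F\kron F$ (and similarly on the $\cin$ side), left- and right-multiplying $\mathcal{C}$ by $I_{\cout}\kron(F\kron F)$ and $I_{\cin}\kron(F^*\kron F^*)$ acts entrywise on blocks and produces the $\cout\times\cin$ block matrix $[D_{ij}]$ of diagonal blocks. Using $F^* = F^{-1}$ (so $(F\kron F)^{-1} = F^*\kron F^*$) and the fact that each $S_{c,n^2}$ is a real permutation matrix (hence $S^* = S^{T} = S^{-1}$), we get
\begin{equation}
\cF_{\cout}\,\mathcal{C}\,\cF_{\cin}^{*} = S_{\cout,n^2}\,\big(I_{\cout}\kron(F\kron F)\big)\,\mathcal{C}\,\big(I_{\cin}\kron(F^*\kron F^*)\big)\,S_{\cin,n^2}^{T} = S_{\cout,n^2}\,[D_{ij}]\,S_{\cin,n^2}^{T}.
\end{equation}

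Finally I would identify the permutation. Let $S_{c,n^2}$ be the perfect-shuffle (commutation) permutation sending index $(a-1)n^2 + b \mapsto (b-1)c + a$ for $a\in\{1,\dots,c\}$ and $b\in\{1,\dots,n^2\}$, i.e. the map from channel-major to pixel-major ordering. Conjugating $[D_{ij}]$ by $S_{\cout,n^2}$ on the left and $S_{\cin,n^2}^{T}$ on the right reindexes rows and columns so that, for each fixed $b$, the scalars $\{[D_{ij}]_{bb}\}_{i,j}$ are collected into a single contiguous $\cout\times\cin$ block; all off-block entries vanish precisely because each $D_{ij}$ is diagonal. Hence $S_{\cout,n^2}[D_{ij}]S_{\cin,n^2}^{T} = \mathcal{D}$ is block diagonal with $n^2$ blocks of size $\cout\times\cin$, the $b$-th block being the matrix $\tilde W[:,:,i_b,j_b]$ of Fourier coefficients at the $b$-th spatial frequency, which is consistent with Eq.~(3) in the main text.

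The main obstacle is exactly this last step: making precise and verifying that conjugation by the perfect-shuffle permutation converts a $\cout\times\cin$ block matrix of order-$n^2$ diagonal matrices into an $n^2$-fold block-diagonal matrix of $\cout\times\cin$ blocks. This is standard index bookkeeping with the commutation matrix $K_{c,n^2}$ (for which $K_{c,n^2}(A\kron B)K_{c,n^2}^{-1} = B\kron A$), but one must track carefully which permutation is applied on which side and confirm that the diagonality of each $D_{ij}$ is what kills the cross terms; everything else is a routine consequence of Theorem~\ref{easydiag} and the block-diagonal action of Kronecker products with the identity.
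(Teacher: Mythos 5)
Your proposal is correct and follows essentially the same route as the paper: apply Theorem~\ref{easydiag} blockwise via $I_c \kron (F \kron F)$ to obtain a $\cout \times \cin$ block matrix of diagonal matrices, then conjugate by the perfect-shuffle permutations $S_{\cout,n^2}$ and $S_{\cin,n^2}^T$ to regroup the diagonal entries into $n^2$ dense $\cout \times \cin$ blocks (the paper records this as $[\mathcal{D}_k]_{ij} = [\hat{\mathcal{D}}_{ij}]_{kk}$, matching your index bookkeeping). The one step you flag as the main obstacle is handled in the paper only by an explicit worked example of the shuffle, so your commutation-matrix framing is, if anything, slightly more explicit.
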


\begin{proof}
We first look at each of the blocks of $\mathcal{C}$ individually, referring to $\hat{\mathcal{D}}$ as the block matrix
before applying the $S$ permutations,
{\emph i.e.,} $\hat{\cD} = S_{\cout,n^2}^T \cD S_{\cin,n^2}$, so that:
\begin{align}
	\hat{\mathcal{D}}_{ij} = \left[ \left(I_{\cout} \kron (F \kron F) \right) \mathcal{C} \left( I_{\cin} \kron (F^* \kron F^*)\right) \right]_{ij}
	&= (F \kron F) \cC_{ij} (F^* \kron F^*) \notag\\
	&= \diag(\vecc (FW_{ij}F) ),
\end{align}
where $W_{ij}$ are the weights of the $(ij)^{\mathsf{th}}$ single-channel convolution,
using Theorem~\ref{easydiag}.
That is, $\hat{\mathcal{D}}$ is a block matrix of diagonal matrices.
Then, let $S_{a,b}$ be the \emph{perfect shuffle} matrix that permutes the block matrix of diagonal matrices to a block diagonal matrix.
$S_{a, b}$ can be constructed by subselecting rows of the identity matrix. Using slice notation:
	\begin{equation}
	S_{a,b} = \begin{bmatrix}
		I_{ab}(1:b:ab, :) \\
		I_{ab}(2:b:ab, :) \\
		\vdots \\
		I_{ab}(b:b:ab, :) \\
	\end{bmatrix}.
	\end{equation}
As an example:
	\begin{equation}
		S_{2,4}
		\underbrace{%
		\left[\begin{smallarray}{@{}cccc|cccc|cccc@{}}
			\mathbf{a} & 0 & 0 & 0 &    \mathbf{e} & 0 & 0 & 0 &    \mathbf{i} & 0 & 0 & 0  \\
			0 & \mathbf{b} & 0 & 0 &    0 & \mathbf{f} & 0 & 0 &    0 & \mathbf{j} & 0 & 0  \\
			0 & 0 & \mathbf{c} & 0 &    0 & 0 & \mathbf{g} & 0 &    0 & 0 & \mathbf{k} & 0  \\
			0 & 0 & 0 & \mathbf{d} &    0 & 0 & 0 & \mathbf{h} &    0 & 0 & 0 & \mathbf{l}  \\
			\noalign{\vskip1pt\hrule\vskip1pt}
			\mathbf{m} & 0 & 0 & 0 &    \mathbf{q} & 0 & 0 & 0 &    \mathbf{u} & 0 & 0 & 0  \\
			0 & \mathbf{n} & 0 & 0 &    0 & \mathbf{r} & 0 & 0 &    0 & \mathbf{v} & 0 & 0  \\
			0 & 0 & \mathbf{o} & 0 &    0 & 0 & \mathbf{s} & 0 &    0 & 0 & \mathbf{w} & 0  \\
			0 & 0 & 0 & \mathbf{p} &    0 & 0 & 0 & \mathbf{t} &    0 & 0 & 0 & \mathbf{x} 
		\end{smallarray}\right]}_{\hat{\mathcal{D}}}
		S_{3,4}^T
		=%
		\underbrace{%
		\left[\begin{smallarray}{@{}ccc|ccc|ccc|ccc@{}}
			\mathbf{a} & \mathbf{e} & \mathbf{i} & 0 & 0 & 0 & 0 & 0 & 0 & 0 & 0 & 0 \\
			\mathbf{m} & \mathbf{q} & \mathbf{u} & 0 & 0 & 0 & 0 & 0 & 0 & 0 & 0 & 0 \\
			\noalign{\vskip1pt\hrule\vskip1pt}
			0 & 0 & 0 & \mathbf{b} & \mathbf{f} & \mathbf{j} & 0 & 0 & 0 & 0 & 0 & 0 \\
			0 & 0 & 0 & \mathbf{n} & \mathbf{r} & \mathbf{v} & 0 & 0 & 0 & 0 & 0 & 0 \\
			\noalign{\vskip1pt\hrule\vskip1pt}
			0 & 0 & 0 & 0 & 0 & 0 & \mathbf{c} & \mathbf{g} & \mathbf{k} & 0 & 0 & 0 \\
			0 & 0 & 0 & 0 & 0 & 0 & \mathbf{o} & \mathbf{s} & \mathbf{w} & 0 & 0 & 0 \\
			\noalign{\vskip1pt\hrule\vskip1pt}
			0 & 0 & 0 & 0 & 0 & 0 & 0 & 0 & 0 & \mathbf{d} & \mathbf{h} & \mathbf{l} \\
			0 & 0 & 0 & 0 & 0 & 0 & 0 & 0 & 0 & \mathbf{p} & \mathbf{t} & \mathbf{x} \\
		\end{smallarray}\right]}_{\mathcal{D}}.
	\end{equation}
Then, with the perfect shuffle matrix, we can compute the block diagonal matrix $\mathcal{D}$ as:
\begin{align}
	S_{\cout, n^2} \hat{D} S^T_{\cin, n^2} &= S_{\cout, n^2} \left(I_{\cout} \kron (F \kron F) \right) \mathcal{C} \left( I_{\cin} \kron (F^* \kron F^*)\right) S^T_{\cin, n^2} \notag \\
	&= \cF_{\cout} \cC \cF_{\cin}^* = \mathcal{D}.
\end{align}

The effect of left and right-multiplying with the perfect shuffle matrix
is to create a new matrix $\mathcal{D}$ from $\hat{\mathcal{D}}$
	such that $[\mathcal{D}_k]_{ij} = [\hat{\mathcal{D}}_{ij}]_{kk}$,
where the subscript inside the brackets refers to the $k^{th}$ diagonal block and the $(ij)^{th}$ block respectively.
\end{proof}
 
\begin{remark}
It is much more simple to compute $\mathcal{D}$ (here \texttt{wfft}) in tensor form given the convolution weights \texttt{w}
as a $\cout \times \cin \times n \times n$ tensor:
\vspace{-0.5em}
\begin{minted}{python}
wfft = fft2(w).reshape(cout, cin, n**2).permute(2, 0, 1).
\end{minted}
\end{remark}
\vspace{0.5em}
\begin{definition}
	The Cayley transform is a bijection between skew-Hermitian matrices and unitary matrices;
	for real matrices, it is a bijection between skew-symmetric and orthogonal matrices.
	We apply the Cayley transform to an arbitrary matrix by first computing its skew-Hermitian part:
	we define the function $\mathsf{cayley} : \Comp^{m \times m} \to \Comp^{m \times m}$ by 
	$\mathsf{cayley}(B) = (I_m - B + B^*)(I_m + B - B^*)^{-1}$, where we compute the skew-Hermitian
	part of $B$ inline as $B - B^*$.
	Note that the Cayley transform of a real matrix is always real, i.e.,
	$\operatorname{Im}(B) = 0 \Rightarrow \operatorname{Im}(\mathsf{cayley}(B)) = 0,$
	in which case $B - B^* = B - B^T$ is a skew-symmetric matrix.
\end{definition}

We now note a simple but important fact that we will use to show that our convolutions are always \emph{exactly} real despite
manipulating their complex representations in the Fourier domain.

\begin{lemma}
	\label{passthrough}
	Say $J \in \Comp^{m \times m}$ is unitary so that $J^*J = I$, and $B = J\tilde{B}J^*$ for $B \in \Real^{m \times m}$ and $\tilde{B} \in \Comp^{m \times m}.$ Then $\mathsf{cayley}(B) = J \mathsf{cayley}(\tilde{B}) J^*.$
\end{lemma}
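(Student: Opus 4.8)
The plan is to expand the definition $\mathsf{cayley}(B) = (I_m - B + B^*)(I_m + B - B^*)^{-1}$ and simply push the conjugation by $J$ through each factor, using that $J$ is unitary. First I would record that conjugation by $J$ commutes with taking adjoints: since $(J^*)^* = J$, we have $B^* = (J\tilde B J^*)^* = J\tilde B^* J^*$. Hence the skew-Hermitian part transforms as $B - B^* = J(\tilde B - \tilde B^*)J^*$, and together with $J I_m J^* = JJ^* = I_m$ this gives
\begin{align}
	I_m - B + B^* &= J\bigl(I_m - \tilde B + \tilde B^*\bigr)J^*, \notag \\
	I_m + B - B^* &= J\bigl(I_m + \tilde B - \tilde B^*\bigr)J^*. \notag
\end{align}

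Next I would invert the second factor. For any invertible $M \in \Comp^{m\times m}$ one has $(JMJ^*)^{-1} = J M^{-1} J^*$ because $J^* = J^{-1}$; here $M = I_m + \tilde B - \tilde B^*$ is indeed invertible since $\tilde B - \tilde B^*$ is skew-Hermitian and therefore has purely imaginary eigenvalues, so $M$ has eigenvalues with real part $1$ and is nonsingular (this is exactly the observation made in the paragraph preceding the statement). Multiplying the two conjugated factors, the inner $J^*J = I_m$ cancels:
\begin{equation}
	\mathsf{cayley}(B) = J\bigl(I_m - \tilde B + \tilde B^*\bigr)J^* \, J\bigl(I_m + \tilde B - \tilde B^*\bigr)^{-1}J^* = J\,\mathsf{cayley}(\tilde B)\,J^*. \notag
\end{equation}

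There is no real obstacle here; every step is a one-line manipulation, and the only point deserving a word of care is the invertibility of $I_m + \tilde B - \tilde B^*$, handled as above. I would also remark that the hypothesis $B \in \Real^{m\times m}$ plays no role in the identity itself — it is stated because in the intended application $J$ is the (unitary) block-diagonalizing transform $\cF$, $B$ is the real doubly-block-circulant matrix of the convolution, and $\tilde B$ its complex block-diagonalization, so combining this lemma with the earlier remark that $\mathsf{cayley}$ of a real matrix is real is precisely what guarantees that computing the Cayley transform blockwise in the Fourier domain and transforming back yields an exactly real convolution.
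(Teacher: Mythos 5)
Your proof is correct and follows essentially the same route as the paper: conjugate each factor of the Cayley transform by $J$ using $B^* = J\tilde B^* J^*$ and $JJ^* = I$, pull the inverse through, and cancel the inner $J^*J$. Your added remarks on the invertibility of $I_m + \tilde B - \tilde B^*$ and on the role of the realness hypothesis are accurate and consistent with the surrounding discussion in the paper.
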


\begin{proof}
	First note that $B = J \tilde{B} J^*$ implies $B^T = B^* = (J \tilde{B} J^*)^* = J \tilde{B}^* J^*$.
	Then
	\begin{align}
		\mathsf{cayley}(B) = (I - B + B^T)(I + B - B^T) &= (I - J \tilde{B} J^* + J \tilde{B}^* J^*)(I + J \tilde{B} J^* - J \tilde{B}^* J^*)^{-1} \notag \\
		&= J (I - \tilde{B} + \tilde{B}^* ) J^* \left[J(I + \tilde{B} - \tilde{B}^*)J^*\right]^{-1} \notag \\
		&= J (I - \tilde{B} + \tilde{B}^* ) J^* \left[J(I + \tilde{B} - \tilde{B}^*)^{-1} J^*\right] \notag \\
		&= J (I - \tilde{B} + \tilde{B}^* ) (I + \tilde{B} - \tilde{B}^*)^{-1} J^* \notag \\
		&= J \mathsf{cayley}(\tilde{B}) J^*.
	\end{align}
\end{proof}

For the rest of this section, we drop the subscripts of $\mathcal{F}$ and $S$ when they can be inferred from context.

\begin{theorem}
\label{cayleythm}
When $\cin = \cout = c$,
applying the Cayley transform to the block diagonal matrix $\mathcal{D}$ results
in a real, orthogonal multi-channel 2D circular convolution:
$$\mathsf{cayley}(\mathcal{C}) = \mathcal{F^*} \mathsf{cayley}( \mathcal{D} ) \mathcal{F}.$$
\end{theorem}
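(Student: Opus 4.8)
The plan is to assemble the statement from Corollary~\ref{multidiag}, Lemma~\ref{passthrough}, and the elementary properties of the Cayley transform already recorded. First I would note that $\mathcal{F} = \mathcal{F}_c = S_{c,n^2}\bigl(I_c \kron (F \kron F)\bigr)$ is unitary: $F$ is the (symmetric, unitary) DFT matrix, so $F \kron F$ and hence $I_c \kron (F \kron F)$ are unitary, $S_{c,n^2}$ is a permutation matrix and thus orthogonal, and a product of unitaries is unitary, so $\mathcal{F}^* \mathcal{F} = I$. Since $\cin = \cout = c$, Corollary~\ref{multidiag} reads $\mathcal{F}\mathcal{C}\mathcal{F}^* = \mathcal{D}$, equivalently $\mathcal{C} = \mathcal{F}^* \mathcal{D}\, \mathcal{F}$.

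Next I would apply Lemma~\ref{passthrough} with $J := \mathcal{F}^*$ (unitary), $\tilde B := \mathcal{D}$, and $B := \mathcal{F}^* \mathcal{D}\,\mathcal{F} = \mathcal{C}$, which is a genuine real matrix (the doubly-block-circulant matrix of a real convolution). The lemma presupposes that $\mathsf{cayley}(B)$ is defined, i.e.\ that $I + \mathcal{C} - \mathcal{C}^T$ is invertible; this holds because $\mathcal{C} - \mathcal{C}^T$ is skew-symmetric and so has purely imaginary eigenvalues, whence $I + (\mathcal{C}-\mathcal{C}^T)$ has no zero eigenvalue (the observation made in Section~\ref{sec:method}). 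Lemma~\ref{passthrough} then gives directly $\mathsf{cayley}(\mathcal{C}) = \mathcal{F}^* \mathsf{cayley}(\mathcal{D})\,\mathcal{F}$, the claimed identity.

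It remains to verify the three adjectives. Realness follows from the remark in the definition of $\mathsf{cayley}$: the Cayley transform of a real matrix is real, and $\mathcal{C}$ is real. Orthogonality is the textbook fact: for real $\mathcal{C}$ we have $\mathsf{cayley}(\mathcal{C}) = (I - A)(I + A)^{-1}$ with $A := \mathcal{C} - \mathcal{C}^T$ skew-symmetric, and multiplying by the transpose (using that $I-A$ and $(I+A)^{-1}$ commute) shows the product is orthogonal. That $\mathsf{cayley}(\mathcal{C})$ is again a \emph{multi-channel $2$D circular convolution}: $\mathsf{cayley}(\mathcal{D})$ is block diagonal with $n^2$ blocks of size $c \times c$, because skew-Hermitianization, adding $I$, inverting, and multiplying all preserve that block shape — each block is $(I_c - \mathcal{D}_k + \mathcal{D}_k^*)(I_c + \mathcal{D}_k - \mathcal{D}_k^*)^{-1}$, with the inner factor invertible by the same eigenvalue argument applied blockwise — and conjugating such a block-diagonal matrix back by $\mathcal{F}^*$ and $\mathcal{F}$ is exactly the inverse of the block-diagonalization of Corollary~\ref{multidiag}, so the outcome is the doubly-block-circulant matrix of some real convolution. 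Equivalently one may argue at the level of operators: $I$ and $A$ are circular convolutions, hence so is $I - A$, so is $(I+A)^{-1}$ (the inverse of an invertible circular convolution), and products of circular convolutions are circular convolutions.

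The main obstacle I anticipate is not any single hard step but making the ``still a convolution'' claim airtight: one must be careful that inverting $I + A$ stays inside the class of circular convolutions (its kernel is generically full $n \times n$, not $k \times k$) and that each blockwise inverse $(I_c + \mathcal{D}_k - \mathcal{D}_k^*)^{-1}$ exists — both handled by the skew-symmetric / skew-Hermitian eigenvalue argument. Everything else is bookkeeping: unitarity of $\mathcal{F}$, one invocation of Lemma~\ref{passthrough}, and the standard orthogonality of the Cayley transform.
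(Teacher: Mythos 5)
Your proposal is correct and follows essentially the same route as the paper's own proof: establish that $\mathcal{F}$ is unitary (permutation matrix times a Kronecker product of unitary DFT matrices), then apply Lemma~\ref{passthrough} with $J = \mathcal{F}^*$, $B = \mathcal{C}$, $\tilde{B} = \mathcal{D}$, with realness following because $\mathcal{C}$ is real. Your additional care in verifying invertibility of $I + \mathcal{C} - \mathcal{C}^T$ and in checking that the output remains a (full-kernel) circular convolution goes slightly beyond what the paper writes explicitly, but it fills in details the paper leaves implicit rather than taking a different path.
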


\begin{proof}
	Note that $\mathcal{F}$ is unitary: 
	\begin{equation}
		\mathcal{F} \mathcal{F^*} = S(I_c \kron (F \kron F)) (I_c \kron (F^* \kron F^*)) S^T = S I_{cn^2} S^T = S S^T = I_{cn^2},
	\end{equation}
	since $S$ is a permutation matrix and is thus orthogonal.
	Then apply Lemma~\ref{passthrough}, where we have $J = \mathcal{F}^*$, $B = \mathcal{C}$, and $\tilde{B} = \mathcal{D}$, to see
	the result.
	Note that $\mathsf{cayley}(\mathcal{C})$ is real because $\mathcal{C}$ is real;
	that is,
	even though we apply the Cayley transform to skew-Hermitian matrices in the Fourier domain,
	the resulting convolution is real.
\end{proof}

\begin{remark}
	While we deal with skew-Hermitian matrices in the Fourier domain,
	we are still effectively parameterizing the Cayley transform in terms of skew-symmetric matrices:
	as in the note in Lemma~\ref{passthrough},
	we can see that
	\begin{equation}	
		\mathcal{C} = \mathcal{F^*DF} \Rightarrow 
		\mathcal{C} - \mathcal{C^T} = \mathcal{C} - \mathcal{C}^* 
		= \mathcal{F^*DF} - \mathcal{F^*D^*F} = \mathcal{F^*} (D - D^*) \mathcal{F},
	\end{equation}
	where $\mathcal{C}$ is real, $\mathcal{D}$ is complex,
	and $\mathcal{C} - \mathcal{C}^T$ is skew-symmetric (in the spatial domain)
	despite computing it with a skew-Hermitian matrix $\mathcal{D} - \mathcal{D}^*$ in the Fourier domain. \\
\end{remark}

\begin{remark}
	Since $\mathcal{D}$ is block diagonal, we only need to apply the Cayley transform (and thus invert)
	its $n^2$ blocks of size $c \times c$, which are much smaller than the whole matrix:
	\begin{equation}
	\mathsf{cayley}(\mathcal{D}) = \diag(\mathsf{cayley}(\cD_1), \dots, \mathsf{cayley}(\cD_{n^2}) ).
	\end{equation}
\end{remark}

\subsection{Semi-Orthogonal Convolutions}
\label{apx:semiortho}

In many cases, convolutional layers do not have $\cin = \cout$, in which case they cannot be orthogonal.
Rather, we must resort to enforcing \emph{semi-orthogonality}.
We can semi-orthogonalize convolutions using the same techniques as above.

\begin{lemma}
	\label{padding}
	Right-padding the multi-channel 2D circular convolution matrix $\mathcal{C}$ (from $\cin$ to $\cout$ channels) with $dn^2$ columns of zeros is
	equivalent to padding each diagonal block of the corresponding block-diagonal matrix $\mathcal{D}$
	on the right with $d$ columns of zeros:
	\begin{equation}
		\begin{bmatrix}\cC & \mathbf{0}_{dn^2}\end{bmatrix} 
			= \mathcal{F}^* \diag(\begin{bmatrix}\cD_1 & \mathbf{0}_{d}\end{bmatrix}, \dots, \begin{bmatrix}\cD_{n^2} & \mathbf{0}_{d}\end{bmatrix}) \mathcal{F},
	\end{equation}
	where $\mathbf{0}_k$ refers to $k$ columns of zeros and a compatible number of rows.
\end{lemma}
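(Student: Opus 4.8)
I would prove this by recognizing the right-padded matrix $\begin{bmatrix}\cC & \mathbf{0}_{dn^2}\end{bmatrix}$ as the doubly block-circulant matrix of an honest multi-channel circular convolution and then quoting Corollary~\ref{multidiag}. Set $d \coloneqq \cout - \cin$ and let $W' \in \Real^{\cout \times \cout \times n \times n}$ be obtained from $W$ by appending $d$ all-zero input-channel slices: the first $\cin$ input-channel slices of $W'$ are those of $W$, and the last $d$ are zero. The block columns of a multi-channel convolution matrix are indexed by input channel; for $\conv_{W'}$ the first $\cin$ of these block columns coincide with those of $\cC$ and the last $d$ vanish (their kernel slices are zero), so the matrix of $\conv_{W'}$ is exactly $\begin{bmatrix}\cC & \mathbf{0}_{dn^2}\end{bmatrix}$ --- equivalently, in the notation preceding Corollary~\ref{multidiag}, $\conv_{W'}$ applied to $\left[\vecc^{T}X_1,\dots,\vecc^{T}X_{\cout}\right]^{T}$ simply ignores the last $d$ input channels.

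Next I would block-diagonalize $\conv_{W'}$ via Corollary~\ref{multidiag}, which now applies with $\cin = \cout = \cout$. Column-block by column-block, $\conv_{W'}$ consists of the single-channel sub-convolutions of $\conv_W$ together with $d$ extra zero convolutions, so by Theorem~\ref{easydiag} the block-matrix-of-diagonals $\hat{\mathcal{D}}$ (in the notation of the proof of Corollary~\ref{multidiag}) for $\conv_{W'}$ is the one for $\conv_W$ with $d$ block columns of zeros appended; after the perfect-shuffle permutation this makes the $k$-th diagonal block of $\conv_{W'}$ equal to $\begin{bmatrix}\cD_k & \mathbf{0}_d\end{bmatrix}$. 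Corollary~\ref{multidiag} applied to $\conv_{W'}$ then reads
\[
\cF_{\cout}\begin{bmatrix}\cC & \mathbf{0}_{dn^2}\end{bmatrix}\cF_{\cout}^{*} \;=\; \diag\!\left(\begin{bmatrix}\cD_1 & \mathbf{0}_d\end{bmatrix},\dots,\begin{bmatrix}\cD_{n^2} & \mathbf{0}_d\end{bmatrix}\right),
\]
and left/right-multiplying by $\cF_{\cout}^{*}$ and $\cF_{\cout}$ --- legitimate since $\cF_{\cout}$ is unitary, by the same computation as in the proof of Theorem~\ref{cayleythm} --- gives the claimed identity, with the unsubscripted $\mathcal{F}$ of the statement being $\cF_{\cout}$, as forced by the $\cout n^2 \times \cout n^2$ size of $\begin{bmatrix}\cC & \mathbf{0}_{dn^2}\end{bmatrix}$.

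I expect the main thing to get right is the bookkeeping that ties both reductions to the conventions of Corollary~\ref{multidiag}: that appending zero input-channel slices to $W$ produces one contiguous block $\mathbf{0}_{dn^2}$ of zero columns at the far right of $\cC$ (not columns interleaved with those of $\cC$), and, correspondingly, that each diagonal block $\cD_k$ acquires its $d$ zero columns at the right end. Both follow immediately once the ordering of $\vecc\mathcal{X}$ and the perfect-shuffle matrices $S$ are read off exactly as in that corollary. A more pedestrian route would be to start from $\mathcal{C} = \cF_{\cout}^{*}\mathcal{D}\cF_{\cin}$ and push the padding through the Kronecker and permutation factors by hand --- \eg using $\begin{bmatrix}\cD_k & \mathbf{0}_d\end{bmatrix} = \cD_k\begin{bmatrix}I_{\cin} & \mathbf{0}\end{bmatrix}$ and tracking how $\begin{bmatrix}I_{\cin} & \mathbf{0}\end{bmatrix}$ commutes past $S$ and $I\kron(F\kron F)$ --- but this essentially re-derives Corollary~\ref{multidiag}, so I would prefer the reduction above.
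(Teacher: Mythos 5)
Your proposal is correct and rests on the same underlying fact as the paper's proof: under the block-diagonalization of Corollary~\ref{multidiag}, a zero block column of $\mathcal{C}$ (equivalently, a zero kernel slice $W_{ij}$) corresponds exactly to a zero column in every diagonal block $\cD_k$, since $\hat{\cD}_{ij} = (F \kron F)\cC_{ij}(F^* \kron F^*)$ vanishes iff $\cC_{ij}$ does. The paper states this as a direct column-by-column equivalence applied to $j = \cin+1,\dots,\cin+d$, whereas you package it as Corollary~\ref{multidiag} applied to a zero-padded weight tensor $W'$ --- a cosmetic difference in presentation, not a different argument.
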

\begin{proof}
For a fixed column $j$, note that
	\begin{equation}
		[\mathcal{D}_{k}]_{ij} = 0 \text{ for all } i, k
		\Longleftrightarrow [\hat{\mathcal{D}}_{ij}]_{kk} = 0 \text{ for all } i, k
		\Longleftrightarrow \cC_{ij} = \BO \text{ for all } i,
	\end{equation}
	since $\hat{\cD}_{ij} = (F \kron F) \cC_{ij} (F^* \kron F^*) = \BO$ only when $\cC_{ij} = \BO$.
	Apply this for $j = \cin + 1, \dots, \cin + d$.
\end{proof}

\begin{lemma}
	\label{projecting}
	Projecting out $d$ blocks of columns of $\cC$ is equivalent to projecting out $d$ columns of each
	of the diagonal blocks of $\mathcal{D}$:
	\begin{equation}
		\cC \begin{bmatrix} I_{d n^2} \\ \mathbf{0} \end{bmatrix} = 
			\mathcal{F}^* \diag\left(\cD_1 \begin{bmatrix} I_{d} \\ \mathbf{0} \end{bmatrix}, \dots, \cD_{n^2} \begin{bmatrix} I_{d} \\ \mathbf{0} \end{bmatrix}\right) \mathcal{F}
	\end{equation}
\end{lemma}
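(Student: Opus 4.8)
The plan is to mirror the proof of Lemma~\ref{padding}, using the same correspondence between the blocks $\cC_{ij}$ of a doubly-block-circulant matrix and the columns of the diagonal blocks $\cD_k$ that the perfect shuffle encodes. Write $c$ for the number of output channels of $\cC$; in the intended application ($\cout\geq\cin$) $\cC$ is the square $cn^2\times cn^2$ matrix produced by the Cayley transform and $d=\cin$. The first $dn^2$ columns of $\cC$ are exactly its first $d$ block-columns, i.e.\ the blocks $\cC_{ij}$ with $j\leq d$; likewise $\cD_k\bigl[\begin{smallmatrix}I_d\\\mathbf{0}\end{smallmatrix}\bigr]$ is the first $d$ columns of $\cD_k$. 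So it suffices to show that discarding the block-columns $j>d$ on the spatial side corresponds, after conjugation by the Fourier unitaries, to discarding column $j>d$ of every $\cD_k$.

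The cleanest route is to notice that the column-selection matrix $P:=\bigl[\begin{smallmatrix}I_{dn^2}\\\mathbf{0}\end{smallmatrix}\bigr]$ is \emph{itself} a multi-channel circular convolution: it has $d$ input channels, $c$ output channels, and block structure $[P]_{ij}=\delta_{ij}I_{n^2}$, where $I_{n^2}$ is the single-channel circular convolution with the delta (origin) kernel. Hence Corollary~\ref{multidiag} applies to $P$ verbatim: $\cF_c\,P\,\cF_d^{*}$ is block diagonal with $n^2$ blocks, and the $k$-th block has entries $[(F\kron F)(\delta_{ij}I_{n^2})(F^{*}\kron F^{*})]_{kk}=\delta_{ij}$, i.e.\ it equals $\bigl[\begin{smallmatrix}I_d\\\mathbf{0}\end{smallmatrix}\bigr]$ for every $k$. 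Writing $\Pi:=\diag\bigl(\bigl[\begin{smallmatrix}I_d\\\mathbf{0}\end{smallmatrix}\bigr],\dots,\bigl[\begin{smallmatrix}I_d\\\mathbf{0}\end{smallmatrix}\bigr]\bigr)$, this says $P=\cF_c^{*}\,\Pi\,\cF_d$. Combining with $\cC=\cF_c^{*}\mathcal{D}\cF_c$ from Corollary~\ref{multidiag} (square case) and the unitarity $\cF_c\cF_c^{*}=I$ established inside Theorem~\ref{cayleythm},
\[
\cC P \;=\; \cF_c^{*}\mathcal{D}\cF_c\,\cF_c^{*}\Pi\,\cF_d \;=\; \cF_c^{*}\,\mathcal{D}\,\Pi\,\cF_d,
\]
and since the product of two conformally block-diagonal matrices is block diagonal with the blocks multiplied, $\mathcal{D}\,\Pi=\diag\bigl(\cD_1\bigl[\begin{smallmatrix}I_d\\\mathbf{0}\end{smallmatrix}\bigr],\dots,\cD_{n^2}\bigl[\begin{smallmatrix}I_d\\\mathbf{0}\end{smallmatrix}\bigr]\bigr)$, which is exactly the claim (with $\mathcal{F}^{*}=\cF_c^{*}$ on the left and $\mathcal{F}=\cF_d$ on the right, under the paper's subscript-dropping convention).

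An alternative that avoids introducing $P$ as a convolution is to argue directly as in Lemma~\ref{padding}: the identity $[\cD_k]_{ij}=[\hat{\cD}_{ij}]_{kk}$ with $\hat{\cD}_{ij}=(F\kron F)\cC_{ij}(F^{*}\kron F^{*})$ shows that block-column $j$ of $\cC$ is carried, under conjugation by $I_c\kron(F\kron F)$ followed by the perfect shuffle, to column $j$ of every $\cD_k$; restricting both sides to $j\leq d$ then gives the result. In either route the one point that needs care is the shuffle bookkeeping — verifying that ``the first $dn^2$ columns in the vectorized channel ordering $[\vecc^{T}X_1,\dots]^{T}$'' lines up precisely with ``the first $d$ columns of each $\cD_k$'' — but this is exactly the correspondence already spelled out in the proof of Corollary~\ref{multidiag}, so it can be cited rather than re-derived. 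I expect that to be the only genuine obstacle; everything else is routine block-matrix algebra.
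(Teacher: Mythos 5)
Your proposal is correct, and your primary route differs from the paper's. The paper proves this lemma in one line by the same block-index correspondence you relegate to your ``alternative'': column $j$ of every diagonal block $\cD_k$ collects the entries $[\hat{\cD}_{ij}]_{kk}$, so restricting to $j \leq d$ on the Fourier side is the same as keeping the first $d$ block-columns of $\hat{\mathcal{D}}$, hence of $\mathcal{C}$ --- exactly mirroring Lemma~\ref{padding}. Your main argument instead observes that the selection matrix $P = \bigl[\begin{smallmatrix} I_{dn^2} \\ \mathbf{0} \end{smallmatrix}\bigr]$ is itself a multi-channel circular convolution with delta kernels, so Corollary~\ref{multidiag} applies to it directly and yields $P = \cF_{\cout}^{*}\,\Pi\,\cF_{d}$; composing with $\cC = \cF_{\cout}^{*}\mathcal{D}\cF_{\cout}$ and using unitarity of $\cF$ then reduces everything to multiplying conformal block-diagonal matrices. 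This is a genuinely nicer packaging: it turns the shuffle bookkeeping (which the paper handles implicitly and you rightly flag as the only delicate point) into a single application of an already-proved corollary, and the same trick would dispatch Lemma~\ref{padding} as well by writing the zero-padding matrix as a convolution. The cost is that you must check that $P$ really is a convolution matrix in the paper's channel ordering and that $\mathcal{F}$ on the right of the displayed identity is $\cF_d$ under the subscript-dropping convention --- both of which you do correctly. Either route is acceptable; yours is more systematic, the paper's is shorter.
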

\begin{proof}
	This proceeds similarly to the previous lemma: removing columns of each of the $n^2$ matrices $\cD_1, \dots, \cD_{n^2}$
	implies removing the corresponding blocks of columns of $\hat{\mathcal{D}}$, and thus of $\mathcal{C}$.
\end{proof}

\begin{theorem}
	If $\mathcal{C}$ is a 2D multi-channel convolution with $\cin \leq \cout$, 
	then letting $d = \cout - \cin$,
	\begin{align}
		&\mathsf{cayley}\left(\begin{bmatrix} \cC & \mathbf{0}_{dn^2} \end{bmatrix}\right) \bmatc{ I_{dn^2} }{\BO} = \notag \\
		&\mathcal{F}^* \diag \left(\mathsf{cayley}\left(\begin{bmatrix}\cD_1 & \mathbf{0}_d \end{bmatrix}\right) \begin{bmatrix}I_d \\ \mathbf{0}_d \end{bmatrix},
				\dots,\mathsf{cayley}\left(\begin{bmatrix}\cD_{n^2} & \mathbf{0}_{d} \end{bmatrix}\right) \begin{bmatrix}I_d \\ \mathbf{0}_d \end{bmatrix}\right)\mathcal{F},
	\end{align}
	which is a real 2D multi-channel semi-orthogonal circular convolution.
\end{theorem}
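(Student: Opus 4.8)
The plan is to compose the three preceding ingredients---the block-diagonalization of multi-channel convolutions (Corollary~\ref{multidiag}), the ``pass-through'' property of the Cayley transform under unitary conjugation (Lemma~\ref{passthrough}), and the two padding/projection lemmas (Lemmas~\ref{padding} and \ref{projecting})---rather than to prove anything from scratch. The left-hand side is the padded-then-Cayley-then-projected convolution matrix of order $\cout n^2 \times \cin n^2$; I want to transport each operation through the Fourier conjugation $\mathcal{C} \mapsto \mathcal{F}^* \mathcal{D} \mathcal{F}$ one step at a time and check that everything lands on the claimed block-diagonal expression.

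First I would write $\mathcal{C} = \mathcal{F}_{\cin}^* \mathcal{D} \mathcal{F}_{\cin}$ using Corollary~\ref{multidiag} (with $\mathcal{D}$ block diagonal, blocks $\cD_1,\dots,\cD_{n^2}$ of size $\cin \times \cin$, since here $\cin = \cout$ as a starting convolution before padding---actually the starting convolution is genuinely $\cin \times \cin$-channel, to which we pad). By Lemma~\ref{padding}, $\begin{bmatrix}\cC & \mathbf{0}_{dn^2}\end{bmatrix} = \mathcal{F}_{\cout}^* \diag(\begin{bmatrix}\cD_1 & \mathbf{0}_d\end{bmatrix},\dots) \mathcal{F}_{\cin}$. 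The next and most delicate step is applying the Cayley transform: $\mathsf{cayley}$ first forms the skew-Hermitian part $B - B^*$, but $\begin{bmatrix}\cC & \mathbf{0}\end{bmatrix}$ is not square, so I must be precise about the convention---$\mathsf{cayley}$ is defined on square matrices, so implicitly the padded object is the square $\cout n^2 \times \cout n^2$ matrix $\begin{bmatrix}\cC & \mathbf{0}_{dn^2}\end{bmatrix}$ viewed with $\cin + d = \cout$ columns, i.e. it \emph{is} square. Good. Then I would invoke Lemma~\ref{passthrough} with $J = \mathcal{F}_{\cout}^*$ and $\tilde B = \diag(\begin{bmatrix}\cD_1 & \mathbf{0}_d\end{bmatrix},\dots)$; this requires noting $B = J\tilde B J^*$ is real, which follows because the padded convolution matrix has real entries (padding with zeros preserves realness). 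Lemma~\ref{passthrough} then gives $\mathsf{cayley}(\begin{bmatrix}\cC & \mathbf{0}\end{bmatrix}) = \mathcal{F}^* \mathsf{cayley}(\tilde B)\mathcal{F}$, and since $\mathsf{cayley}$ of a block-diagonal matrix is the block-diagonal of the $\mathsf{cayley}$ of the blocks (the Remark after Theorem~\ref{cayleythm}), this equals $\mathcal{F}^* \diag(\mathsf{cayley}(\begin{bmatrix}\cD_1 & \mathbf{0}_d\end{bmatrix}),\dots)\mathcal{F}$.

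Finally I would right-multiply by the projection $\bmatc{I_{dn^2}}{\BO}$---wait, this should be $\bmatc{I_{\cin n^2}}{\BO}$ to recover $\cin$ output channels; I would double-check the index against the statement and flag that $d$ there is presumably meant as the number of retained blocks, consistent with the block sizes $\begin{bmatrix}I_d\\ \mathbf{0}_d\end{bmatrix}$ appearing on the right. Applying Lemma~\ref{projecting} (which says projecting out blocks of columns of a Fourier-conjugated matrix equals projecting columns of each diagonal block) pushes the projection through to each block, yielding exactly the right-hand side. To finish, I would argue the result is semi-orthogonal: $\mathsf{cayley}$ of a square matrix is orthogonal (Theorem~\ref{cayleythm} / definition of the Cayley transform), restricting to a subset of columns of an orthogonal matrix gives a matrix with $U^T U = I$, and conjugating by unitary $\mathcal{F}$ preserves this; realness is inherited as in Theorem~\ref{cayleythm}. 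The main obstacle is bookkeeping rather than mathematics: making the non-square ``padded'' matrix fit the square-input convention of $\mathsf{cayley}$ and of Lemma~\ref{passthrough}, and getting every index ($d$ vs. $\cin$ vs. $\cout$, row-count vs. column-count of each $\mathbf{0}_k$) to line up between the convolution level and the block level; once the conventions are pinned down, each of the four cited results slots in mechanically.
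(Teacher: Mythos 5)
Your proof is correct and follows essentially the same route as the paper's: pad via Lemma~\ref{padding}, push the Cayley transform through the Fourier conjugation (the paper packages your direct use of Lemma~\ref{passthrough} and the block-diagonal remark as an invocation of Theorem~\ref{cayleythm}), then project via Lemma~\ref{projecting}, with semi-orthogonality and realness inherited as you describe. Your flag that the projector should read $I_{\cin n^2}$ (and $I_{\cin}$ blockwise) rather than $I_{dn^2}$ correctly identifies a notational slip in the statement, consistent with the Schur-complement proposition that follows it.
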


\begin{proof}
	For the first step, we use Lemma~\ref{padding} for right padding,
	getting
	\begin{equation}
		\begin{bmatrix}\cC & \mathbf{0}_{dn^2}\end{bmatrix} 
			= \mathcal{F}^* \diag(\begin{bmatrix}\cD_1 & \mathbf{0}_{d}\end{bmatrix}, \dots, \begin{bmatrix}\cD_{n^2} & \mathbf{0}_{d}\end{bmatrix}) \mathcal{F}.
	\end{equation}
	Then, noting that $\begin{bmatrix}\cC & \mathbf{0}_{dn^2}\end{bmatrix}$
	is a convolution matrix with $\cin = \cout$,
	we can apply Theorem~\ref{cayleythm} (and the following remark) to get:
	\begin{equation}
		\mathsf{cayley}\left(\begin{bmatrix} \cC & \mathbf{0}_{dn^2} \end{bmatrix}\right) 
			= \mathcal{F}^* \diag \left(\mathsf{cayley}\left(\begin{bmatrix}\cD_1 & \mathbf{0}_d \end{bmatrix}\right),
				\dots,\mathsf{cayley}\left(\begin{bmatrix}\cD_{n^2} & \mathbf{0}_{d} \end{bmatrix}\right) \right)\mathcal{F}.
	\end{equation}
	Since $\mathsf{cayley}\left(\begin{bmatrix} \cC & \mathbf{0}_{dn^2} \end{bmatrix}\right)$
	is still a real convolution matrix, we can apply Lemma~\ref{projecting} to get the result.
\end{proof}

This demonstrates that we can semi-orthogonalize convolutions with $\cin \neq \cout$ by
first padding them so that $\cin = \cout$; despite performing padding, the Cayley transform, and projections on complex matrices
in the Fourier domain, we have shown that the resulting convolution is still real.
In practice, we do not literally perform padding nor projections;
we explain how to do an equivalent but more efficient comptutation on each diagonal block 
$\cD_k \in \Comp^{\cout \times \cin}$ below. \\

\begin{proposition}
We can efficiently compute the Cayley transform for semi-orthogonalization, {\emph i.e.,}
$\mathsf{cayley}\left(\begin{bmatrix}W & \mathbf{0}_d \end{bmatrix} \right) \begin{bsmallmatrix}I_d \\ \mathbf{0}_d \end{bsmallmatrix}$,
when $\cin \leq \cout$ by writing the inverse in terms of the Schur complement.
\end{proposition}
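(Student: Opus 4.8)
The plan is to push the padding‑and‑projection recipe through a single $2\times2$ block computation, so that only one inverse of order $\min(\cin,\cout)=\cin$ survives. Fix a block $W := \cD_k \in \Comp^{\cout\times\cin}$, set $d=\cout-\cin$, and partition $W=\begin{bmatrix}W_1\\ W_2\end{bmatrix}$ with $W_1\in\Comp^{\cin\times\cin}$ and $W_2\in\Comp^{d\times\cin}$. Writing $B=\begin{bmatrix}W & \mathbf{0}_d\end{bmatrix}$, the first step is to compute its skew‑Hermitian part $A=B-B^*$ in block form: since $B=\begin{bmatrix}W_1 & \mathbf{0}\\ W_2 & \mathbf{0}\end{bmatrix}$ and $B^*=\begin{bmatrix}W_1^* & W_2^*\\ \mathbf{0} & \mathbf{0}\end{bmatrix}$, we obtain $I+A=\begin{bmatrix}M & -W_2^*\\ W_2 & I_d\end{bmatrix}$ with $M=I_{\cin}+W_1-W_1^*$. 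The key observation is that the lower‑right block of $I+A$ is exactly $I_d$, so it is the Schur complement with respect to that corner that one should form.

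Second, I would rewrite the Cayley transform to isolate a single block column of $(I+A)^{-1}$. Since $I-B+B^*=I-A=2I-(I+A)$, we have $\mathsf{cayley}(B)=(I-A)(I+A)^{-1}=2(I+A)^{-1}-I$, and therefore
\[
  \mathsf{cayley}(B)\begin{bmatrix}I_{\cin}\\ \mathbf{0}\end{bmatrix}
  =2(I+A)^{-1}\begin{bmatrix}I_{\cin}\\ \mathbf{0}\end{bmatrix}-\begin{bmatrix}I_{\cin}\\ \mathbf{0}\end{bmatrix},
\]
so the whole semi‑orthogonalized block is determined by the left block column of $(I+A)^{-1}$.

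Third, I would apply the block‑inverse formula to $I+A$ with the trivially invertible corner $I_d$: its Schur complement is $S=M-(-W_2^*)(I_d)^{-1}(W_2)=I_{\cin}+W_1-W_1^*+W_2^*W_2\in\Comp^{\cin\times\cin}$, and the left block column of $(I+A)^{-1}$ equals $\begin{bmatrix}S^{-1}\\ -W_2S^{-1}\end{bmatrix}$. Substituting and using $I_{\cin}-W_1+W_1^*-W_2^*W_2=2I_{\cin}-S$ to cancel, $\mathsf{cayley}(B)$ applied to $\begin{bmatrix}I_{\cin}\\ \mathbf{0}\end{bmatrix}$ collapses to
\[
  \mathsf{cayley}\!\left(\begin{bmatrix}W & \mathbf{0}_d\end{bmatrix}\right)\begin{bmatrix}I_{\cin}\\ \mathbf{0}\end{bmatrix}
  =\begin{bmatrix}2S^{-1}-I_{\cin}\\ -2\,W_2S^{-1}\end{bmatrix},\qquad S=I_{\cin}+W_1-W_1^*+W_2^*W_2,
\]
which I would display as the referenced equation. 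Invertibility of $S$ is automatic: $A$ skew‑Hermitian makes $I+A$ nonsingular (it has no $-1$ eigenvalue, as already noted), and since $I_d$ is nonsingular the Schur‑complement criterion forces $S$ nonsingular. The arithmetic is now one $\cin\times\cin$ inverse plus a few $\cin\times\cin$ and $d\times\cin$ products, \ie $\mathcal{O}(c^3)$ per block with $c=\min(\cin,\cout)$; applied to each $\cD_k$ in the Fourier domain, this realizes the promised efficient implementation, and the remaining case $\cin>\cout$ follows by transposing before and after (the output then being non‑expansive rather than norm‑preserving).

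I expect the only real difficulty to be bookkeeping rather than ideas: getting the signs and block placement in $A$ and $I+A$ correct, pairing the block‑inverse formula with the $I_d$ Schur complement rather than the $M$‑corner one, and checking that extracting the first $\cin$ columns commutes with the $2(I+A)^{-1}-I$ rewrite. The identity $\mathsf{cayley}(B)=2(I+A)^{-1}-I$ is what keeps the computation short, since it removes the need to multiply out $(I-A)(I+A)^{-1}$ in full, and the small identity $I_{\cin}-W_1+W_1^*-W_2^*W_2=2I_{\cin}-S$ is the cancellation that produces the clean final expression.
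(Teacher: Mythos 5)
Your proof is correct and takes essentially the same route as the paper's: the same partition of $W$ into a top block $U$ (your $W_1$) and bottom block $V$ (your $W_2$), the same block-inverse computation via the Schur complement with respect to the $I_d$ corner, and the same final formula, since $2S^{-1}-I_{\cin}=(I_{\cin}-U+U^*-V^*V)S^{-1}$. Your rewrite $\mathsf{cayley}(B)=2(I+A)^{-1}-I$ and your use of the Schur-complement nonsingularity criterion are minor (arguably cleaner) streamlinings of the paper's explicit multiplication by $I-A$ and its direct eigenvalue argument for the invertibility of $S$.
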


\begin{proof}
	We can partition $W \in \Comp^{\cout \times \cin}$
	into its top part $U \in \Comp^{\cin \times \cin}$ and bottom part $V \in \Comp^{ (\cout - \cin) \times \cin }$,
	and then write the padded matrix $\begin{bmatrix}W & \mathbf{0}_{\cout - \cin}\end{bmatrix} \in \Comp^{\cout \times \cout}$
	as	
	\begin{equation}
		\begin{bmatrix}W & \mathbf{0}_{\cout - \cin} \end{bmatrix}= %
		\begin{bsmallmatrix}
			U & \mathbf{0} \\
			V & \mathbf{0} \\
		\end{bsmallmatrix}.
	\end{equation}
	Taking the skew-Hermitian part and applying the Cayley transform, then projecting, we get:
	\begin{align}
		\label{padcayley}
		\mathsf{cayley}\left(
		\begin{bsmallmatrix}
			U & \mathbf{0} \\
			V & \mathbf{0} \\
		\end{bsmallmatrix}
		\right)
		\begin{bsmallmatrix}
			I_{\cin} \\
			\mathbf{0}
		\end{bsmallmatrix}
		&=
		\left(
		I_{\cout} - 
		\begin{bsmallmatrix}
			U & \mathbf{0} \\
			V & \mathbf{0} \\
		\end{bsmallmatrix}
		+
		\begin{bsmallmatrix}
			U & \mathbf{0} \\
			V & \mathbf{0} \\
		\end{bsmallmatrix}^*
		\right)
		\left(
		I_{\cout} +
		\begin{bsmallmatrix}
			U & \mathbf{0} \\
			V & \mathbf{0} \\
		\end{bsmallmatrix}
		-
		\begin{bsmallmatrix}
			U & \mathbf{0} \\
			V & \mathbf{0} \\
		\end{bsmallmatrix}^*
		\right)^{-1} 
		\begin{bsmallmatrix}
			I_{\cin} \\
			\mathbf{0}
		\end{bsmallmatrix}
		\notag \\
		&= \begin{bsmallmatrix}
			I_{\cin} - U + U^* & V^* \\
			-V & I_{\cout - \cin}
		\end{bsmallmatrix}
		\begin{bsmallmatrix}
			I_{\cin} + U - U^* & -V^* \\
			V & I_{\cout - \cin}
		\end{bsmallmatrix}^{-1}
		\begin{bsmallmatrix}
			I_{\cin} \\
			\mathbf{0}
		\end{bsmallmatrix}.
	\end{align}
	We focus on computing the inverse while keeping only the first $\cin$ columns.
	We use the inversion formula noted in \citet[p.~13]{zhang2006schur} for a block partitioned matrix $M$,
	\begin{align}
		M^{-1} 
		\begin{bsmallmatrix}
			I_{\cin} \\
			\mathbf{0}
		\end{bsmallmatrix}
		&= \begin{bsmallmatrix}
		P & Q \\
		R & S
		\end{bsmallmatrix}^{-1}
		\begin{bsmallmatrix}
			I_{\cin} \\
			\mathbf{0}
		\end{bsmallmatrix} \notag \\
		&= \begin{bsmallmatrix}
			(M/S)^{-1} & -(M/S)^{-1} Q S^{-1} \\
			-S^{-1} R (M/S)^{-1} & S^{-1} + S^{-1} R (M/S)^{-1} Q S^{-1}
		\end{bsmallmatrix}
		\begin{bsmallmatrix}
			I_{\cin} \\
			\mathbf{0}
		\end{bsmallmatrix} \notag\\
		&= \begin{bsmallmatrix}
			(M/S)^{-1} \\
			-S^{-1} R (M/S)^{-1}
		\end{bsmallmatrix},
	\end{align}
	where we assume $M$ takes the form of the inverse in Eq.~\ref{padcayley}, and $M/S = P - Q S^{-1} R$ is the Schur complement. 
	Using this formula for the first $\cin$ columns of the inverse in Eq.~\ref{padcayley},
	and computing the Schur complement $I_{\cin} + U - U^* + V^* I_{\cout - \cin}^{-1} V$, we find
	\begin{align}\label{apx:semieq}
		\mathsf{cayley}\left(
		\begin{bsmallmatrix}
			U & \mathbf{0} \\
			V & \mathbf{0} \\
		\end{bsmallmatrix}
		\right)
		&=
		\begin{bsmallmatrix}
			I_{\cin} - U + U^* & V^* \\
			-V & I_{\cout - \cin}
		\end{bsmallmatrix}
		\begin{bsmallmatrix}
			(I_{\cin} + U - U^* + V^*V)^{-1} \\
			-V(I_{\cin} + U - U^* + V^*V)^{-1}
		\end{bsmallmatrix} \notag \\
		&=
		\begin{bsmallmatrix}
			(I_{\cin} - U + U^* - V^*V)(I_{\cin} + U - U^* + V^*V)^{-1} \\
			-2V(I_{\cin} + U - U^* + V^*V)^{-1}
		\end{bsmallmatrix} \in \Comp^{\cout \times \cin},
	\end{align}
	which is semi-orthogonal and requires computing only one inverse of size $\cin \leq \cout$.
	Note that this inverse always exists because $U - U^*$ is skew-Hermitian, so it has purely imaginary eigenvalues,
	and $V^*V$ is positive semidefinite and has all real non-negative eigenvalues. That is, the sum
	$I_{\cin} + U - U^* + V^*V$ has all nonzero eigenvalues and is thus nonsingular.
\end{proof}

\begin{proposition}
We can also compute semi-orthogonal convolutions when $\cin \geq \cout$ using the method described above because
$\cayley\left( \bsmatr{\cC^T}{\BO} \right)^T = \cayley\left( \bsmatc{\cC}{\BO} \right)$.
\end{proposition}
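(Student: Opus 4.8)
The plan is to reduce the $\cin \geq \cout$ case to the already-established $\cin \leq \cout$ case by transposition, so that all of the work is concentrated in verifying the displayed identity $\cayley(\bsmatr{\cC^T}{\BO})^T = \cayley(\bsmatc{\cC}{\BO})$. First I would record a purely structural observation: transposition turns a horizontal block-concatenation into a vertical one, so $\bsmatr{\cC^T}{\BO}^T = \bsmatc{(\cC^T)^T}{\BO^T} = \bsmatc{\cC}{\BO}$, and — padding $\cin - \cout$ block-columns, resp.\ block-rows, of zeros — both sides are square of order $\cin n^2$. Hence the identity follows once we know that the Cayley transform commutes with transposition on real matrices, i.e.\ $\cayley(M)^T = \cayley(M^T)$.

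To prove $\cayley(M)^T = \cayley(M^T)$, write $A = M - M^T$ for the skew-symmetric part of $M$, so that $\cayley(M) = (I - A)(I + A)^{-1}$. Taking transposes and using $A^T = -A$ gives $\cayley(M)^T = (I + A^T)^{-1}(I - A^T) = (I - A)^{-1}(I + A)$; since $I - A$ and $I + A$ are both polynomials in $A$ they commute, so this equals $(I + A)(I - A)^{-1}$. On the other hand the skew-symmetric part of $M^T$ is $M^T - M = -A$, whence $\cayley(M^T) = (I + A)(I - A)^{-1}$, matching. All of this is legitimate because $I + A$ is invertible — a real skew-symmetric $A$ has purely imaginary eigenvalues, so $\pm 1 \notin \mathrm{spec}(A)$ — exactly the nonsingularity argument already used in Algorithm~\ref{cayleyconv} and the previous proposition. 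Applying this with $M = \bsmatr{\cC^T}{\BO}$ gives the stated identity.

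Finally I would spell out the construction it enables. For $\cin \geq \cout$, $\cC^T$ is itself a 2D circular convolution (the transpose of a convolution swaps the two channel axes and flips the spatial axes, as in Section~\ref{sec:method}) with $\cout \leq \cin$ channels, so the previous proposition produces the real, norm-preserving semi-orthogonal convolution $\cayley(\bsmatr{\cC^T}{\BO})\,\bsmatc{I_{\cout n^2}}{\BO}$. Transposing and pushing the transpose through the product gives $\bsmatr{I_{\cout n^2}}{\BO}\,\cayley(\bsmatr{\cC^T}{\BO})^T$, which by the identity equals $\bsmatr{I_{\cout n^2}}{\BO}\,\cayley(\bsmatc{\cC}{\BO})$ — i.e.\ bottom-pad $\cC$ to $\cin$ channels, Cayley-transform, and keep the first $\cout$ output-channel blocks. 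The transpose of a norm-preserving semi-orthogonal matrix is a non-expansive semi-orthogonal matrix, and the transpose of a real convolution matrix is again a real convolution matrix, so the output is a real non-expansive semi-orthogonal circular convolution, as claimed; in practice one again avoids explicit padding and projection by applying the Schur-complement formula of Eq.~\ref{apx:semieq} to each $\min(\cin,\cout)\times\min(\cin,\cout)$ Fourier block and transposing the result.

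There is no real obstacle here: the statement is essentially a corollary of the $\cin \leq \cout$ case. The only points needing care are the rearrangement $(I - A)^{-1}(I + A) = (I + A)(I - A)^{-1}$, which uses that $I \pm A$ commute, and keeping the padding bookkeeping straight — horizontal versus vertical concatenation, and the fact that after zero-padding both $\bsmatr{\cC^T}{\BO}$ and $\bsmatc{\cC}{\BO}$ are genuinely square of order $\cin n^2$ so that $\cayley$ is defined on them.
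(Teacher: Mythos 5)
Your proof is correct and follows essentially the same route as the paper's: both hinge on transposing the Cayley transform, using $A^T=-A$ for the skew-symmetric part together with the commutation of $I-A$ and $I+A$ to conclude $\cayley(M)^T=\cayley(M^T)$, and then identifying $\bsmatr{\cC^T}{\BO}^T$ with $\bsmatc{\cC}{\BO}$. You package this as a standalone lemma and add the (correct) bookkeeping about invertibility, padding shapes, and non-expansiveness that the paper leaves implicit, but the argument is the same.
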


\begin{proof}
	We use that $(A^{-1})^T = (A^{T})^{-1}$ and $(I - A)(I + A)^{-1} = (I + A)^{-1} (I - A)$ to see
	\begin{align}
		\cayley \left( \bsmatc{\cC}{\BO} \right)^T &= \left[ \left(I - \bsmatc{\cC}{\BO} + \bsmatc{\cC}{\BO}^T \right) \left(I + \bsmatc{\cC}{\BO} - \bsmatc{\cC}{\BO}^T \right)^{-1} \right]^T \notag \\
		&= \left( I + \bsmatc{\cC}{\BO}^T - \bsmatc{\cC}{\BO} \right)^{-1} \left( I - \bsmatc{\cC}{\BO}^T + \bsmatc{\cC}{\BO} \right) \notag \\
		&= \cayley\left(\bsmatc{\cC}{\BO}^T\right) = \cayley\left(\bsmatr{\cC^T}{\BO}\right).
	\end{align}
\end{proof}

We have thus shown how to (semi-)orthogonalize real multi-channel 2D circular convolutions efficiently in the Fourier domain.
A minimal implementation of our method can be found in Appendix~\ref{apx:implementation}.
The techniques described above could also be used with other orthogonalization methods,
or for calculating the determinants or singular values of convolutions.

\newpage

\section{Additional Results}
\label{appendix-kw}

\begin{figure}[ht!]
	\centering
	\subfigure{\includegraphics[height=2.80cm,trim={0 0 2.6cm 0},clip]{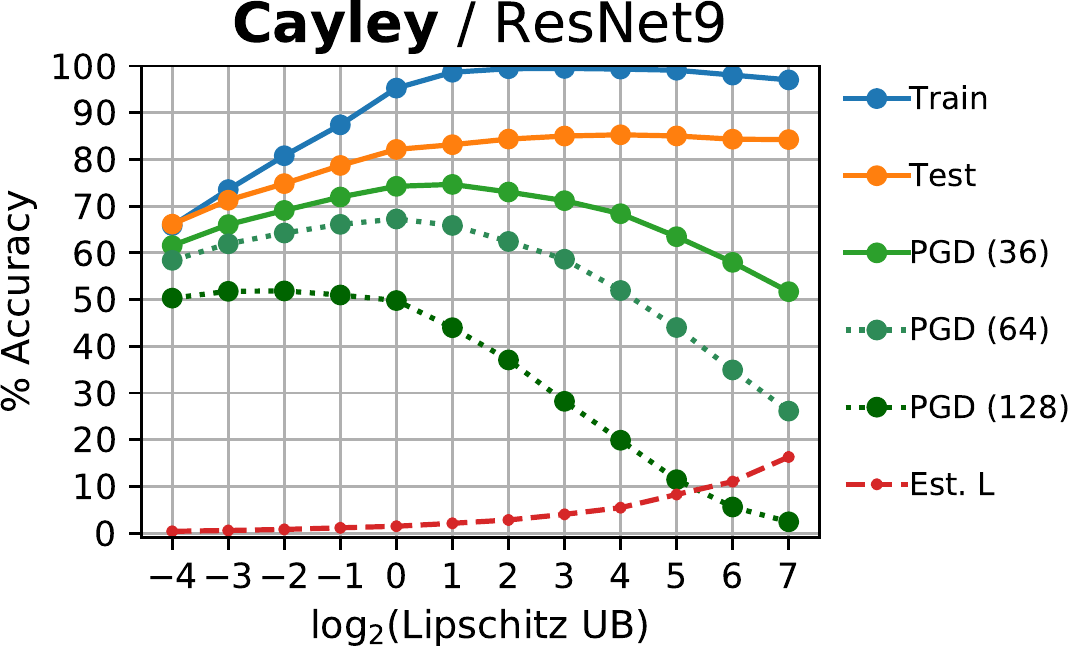}}
	\subfigure{\includegraphics[height=2.80cm,trim={1.422cm 0 2.6cm 0},clip]{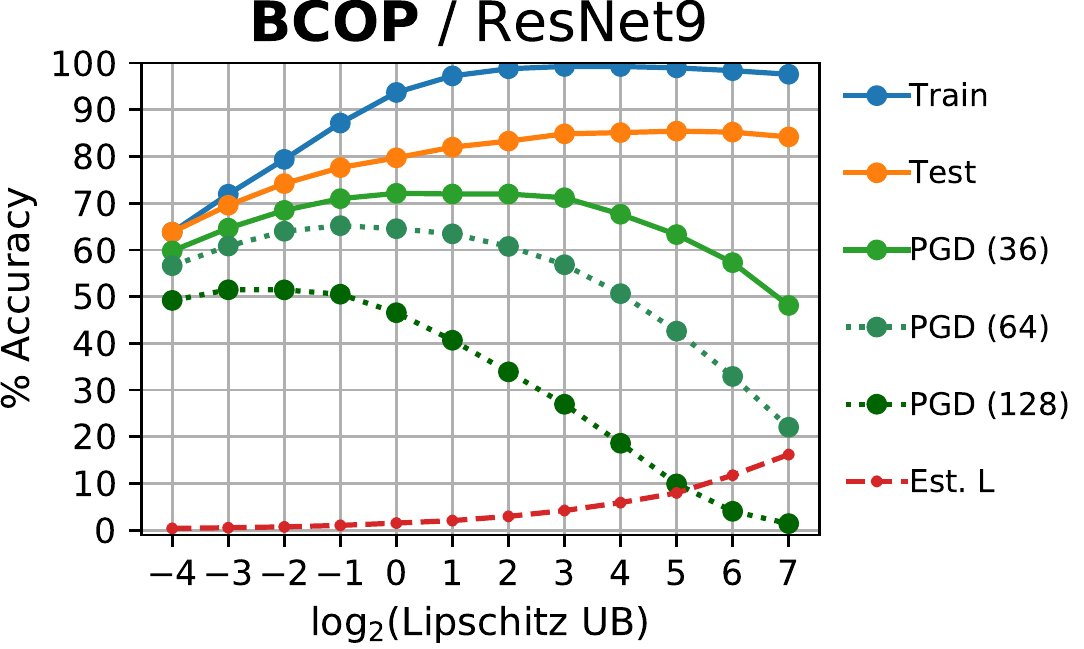}}
	\subfigure{\includegraphics[height=2.80cm,trim={1.422cm 0 2.6cm 0},clip]{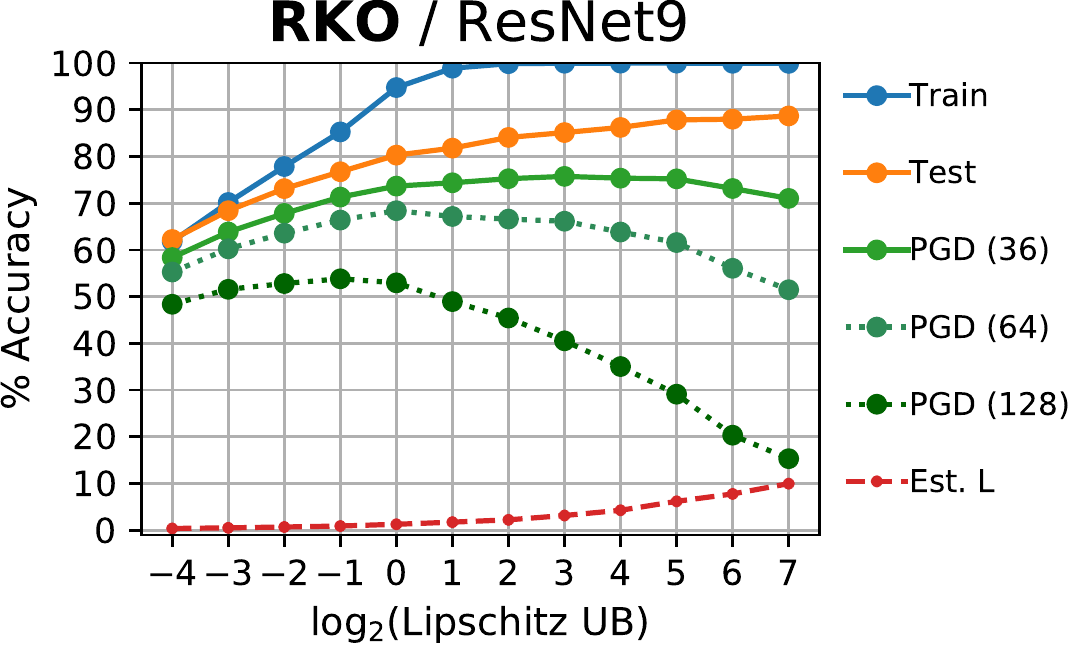}}
	\subfigure{\includegraphics[height=2.80cm,trim={1.422cm 0 0 0},clip]{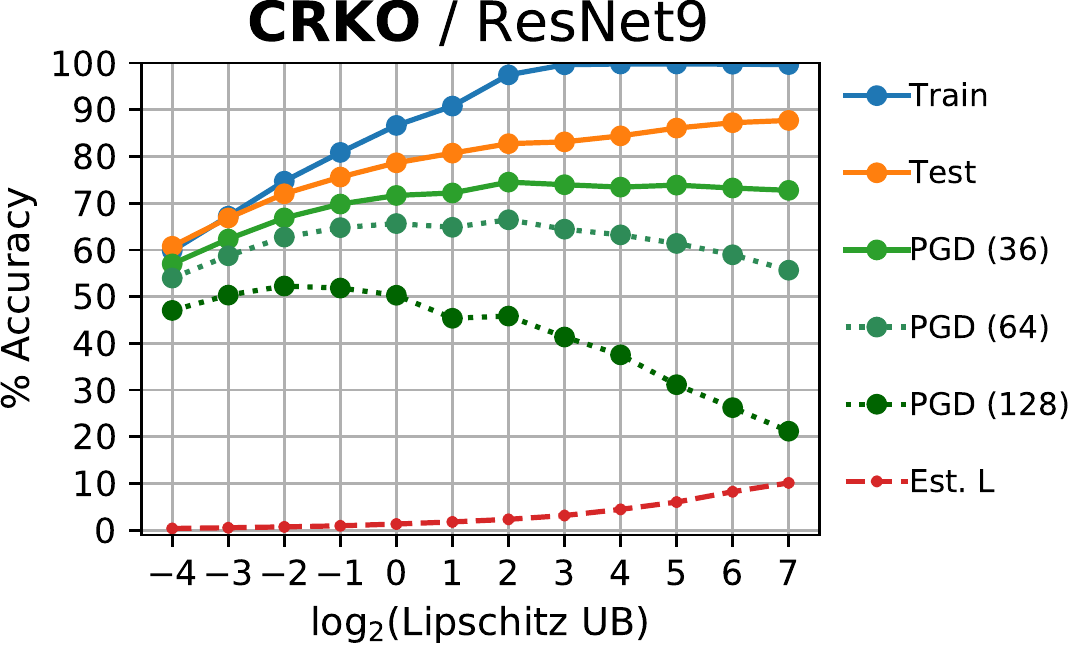}}
	\caption{The robustness/accuracy tradeoff from changing the Lipschitz upper bound for ResNet9.}
	\label{rn9-scales}
\end{figure}

\begin{table}[htp!]
\begin{center}

\begin{tabular*}{\textwidth}{ @{\extracolsep{\fill}} c @{\extracolsep{\fill}}| @{\extracolsep{\fill}} l  @{\extracolsep{\fill}} | @{\extracolsep{\fill}} c @{\extracolsep{\fill}} c @{\extracolsep{\fill}}c @{\extracolsep{\fill}}c @{\extracolsep{\fill}}c @{\extracolsep{\fill}}c | @{\extracolsep{\fill}} c @{\extracolsep{\fill}}}
\toprule
	\multicolumn{1}{c}{} & \multicolumn{1}{c}{} & \multicolumn{7}{c}{\textbf{KWLarge}: Trained for test \& PGD accuracy} \\
	\cmidrule{3-9}
\addlinespace[10pt]
	{$\eps$} & Test Acc.  & {Cayley} & {BCOP} & {RKO} & {CRKO} & {OSSN} & {SVCM} & {\hspace{-0.9em}$.85\cdot$Cayley\hspace{-0.9em}} \\
\midrule
$0$ 
& Clean            & 79.87$\PM$.33 & \best{80.14$\PM$.32} & 79.65$\PM$.20 & 79.53$\PM$.22 & 78.66$\PM$.21 & 79.19$\PM$.30 & 79.55$\PM$.19 \\
\midrule
\multirow{4}{*}{$\frac{36}{255}$}
& PGD              & 66.09$\PM$.47 & 65.31$\PM$.55 & \best{68.66$\PM$.20} & 68.55$\PM$.22 & 68.08$\PM$.20 & 68.35$\PM$.26 & 66.75$\PM$.34 \\
&\small AutoAttack & 62.02$\PM$.60 & 60.63$\PM$.72 & 65.16$\PM$.14 & \best{65.40$\PM$.28} & 64.90$\PM$.17 & 64.92$\PM$.15 & 62.67$\PM$.32 \\
&\emph{Certified}  & \best{38.67$\PM$.23} & 36.92$\PM$.30 & 34.67$\PM$.32 & 35.87$\PM$.43 & 36.29$\PM$.65 & 29.92$\PM$1.1 & 41.97$\PM$.23 \\
&Emp.$\mathsf{Lip}$& 2.245$\PM$.04 & 2.272$\PM$.03 & 2.891$\PM$.02 & 1.866$\PM$.04 & 1.923$\PM$.06 & 1.698$\PM$.12  & 1.999$\PM$.03 \\
\bottomrule
\end{tabular*}
\caption{Trained with normalizing inputs, mean and standard deviation from 5 experiments reported. The normalization layer
	increases the Lipschitz bound of the network to $\approx 4.1$, \viz CIFAR-10 standard deviation.}
\label{kw-table}
\end{center}
\end{table}

\begin{figure}[h]
	\centering
	\subfigure{\includegraphics[height=2.80cm,trim={0 0 2.6cm 0},clip]{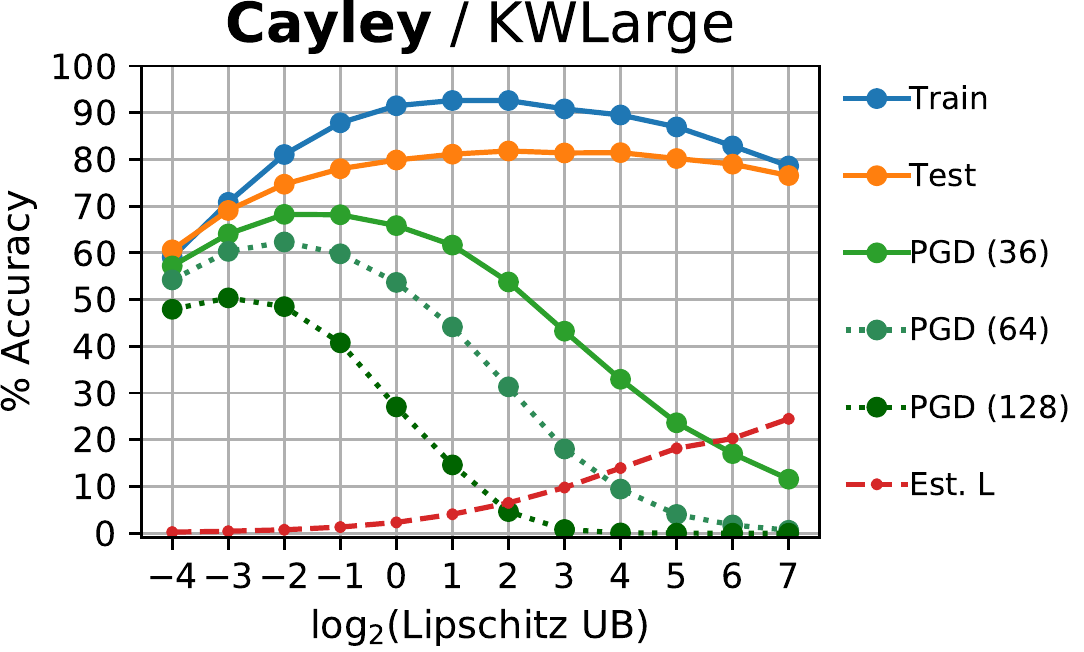}}
	\subfigure{\includegraphics[height=2.80cm,trim={1.422cm 0 2.6cm 0},clip]{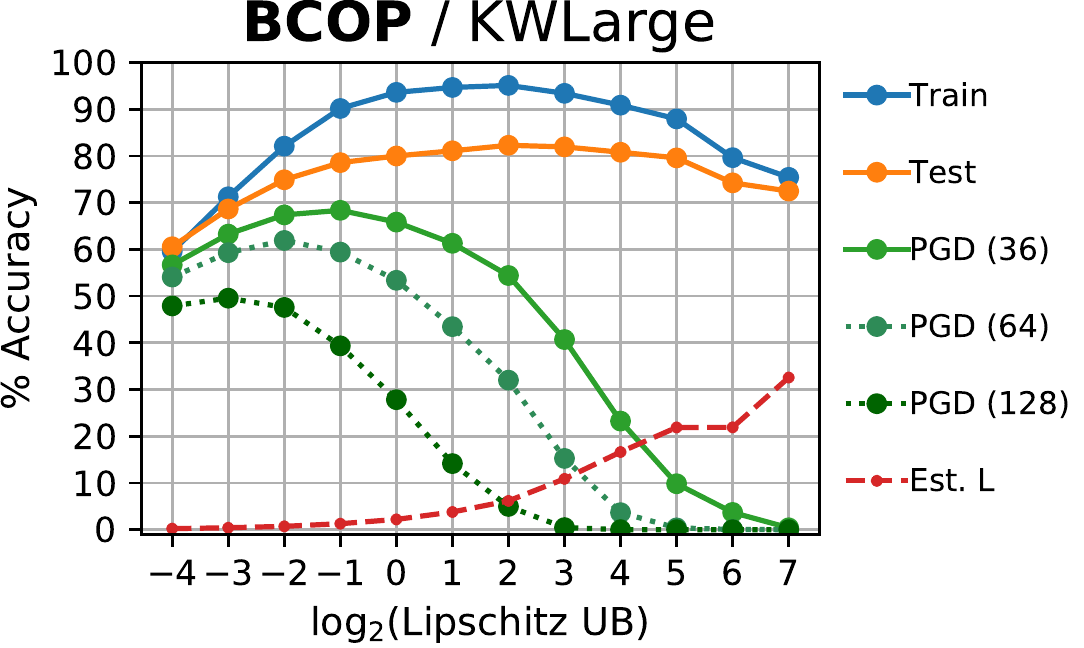}}
	\subfigure{\includegraphics[height=2.80cm,trim={1.422cm 0 2.6cm 0},clip]{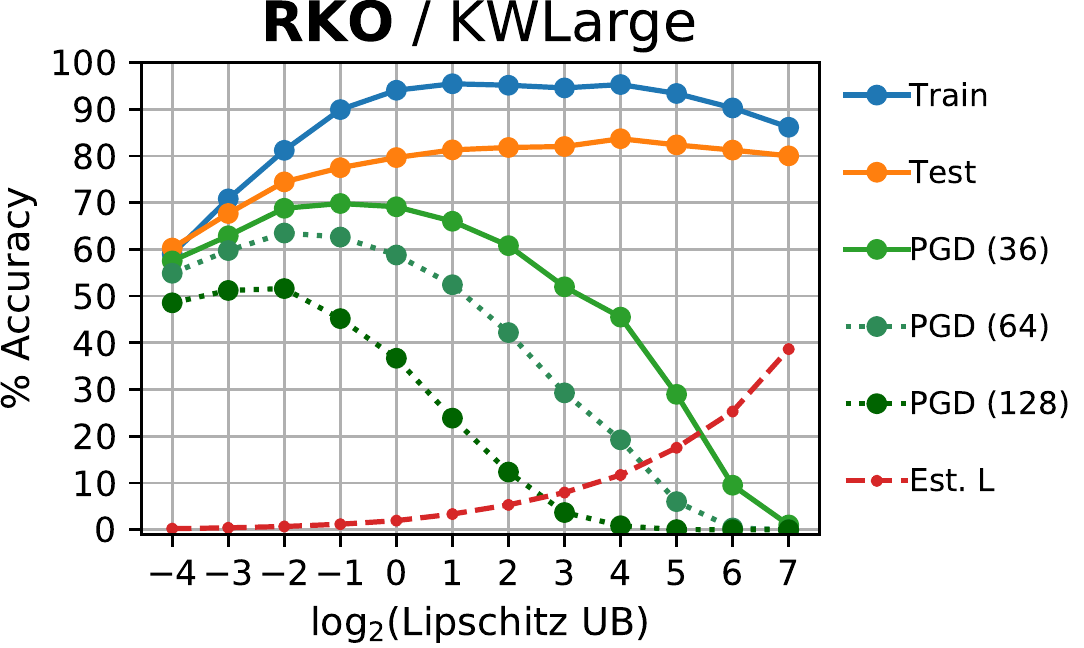}}
	\subfigure{\includegraphics[height=2.80cm,trim={1.422cm 0 0 0},clip]{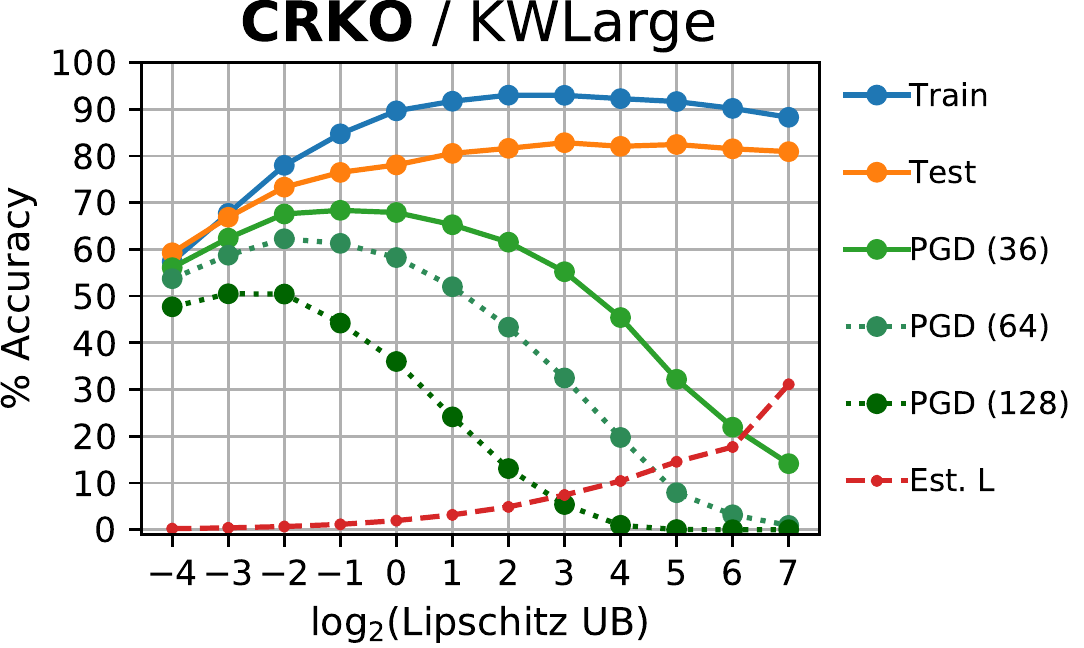}}
	\caption{Robustness vs. accuracy tradeoff from changing the Lipschitz upper bound for KWLarge.}
	\label{kw-scales}
\end{figure}

For KWLarge, our results on empirical robustness were mixed: while our Cayley layer outperforms BCOP in robust accuracy,
the RKO methods are overall more robust by around $2\%$,
for only a marginal decrease in clean accuracy.
We note the lower empirical local Lipschitzness of RKO methods,
which may explain their higher robustness:
Figure~\ref{kw-scales} shows that the best choice of Lipschitz upper-bound
for Cayley and BCOP layers may be less than 1 for this architecture.

\begin{figure}[h!]
	\centering
	\includegraphics[height=2.8cm]{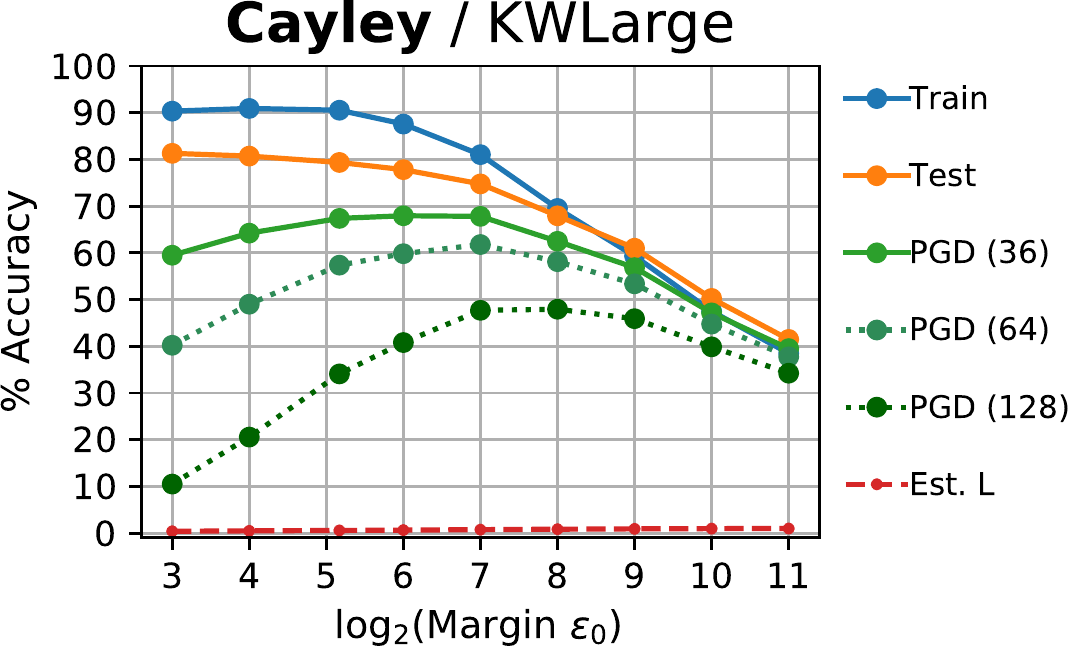}
	\caption{Effect of varying $\eps_0$ for Lipschitz margin training for KWLarge.}
	\label{kw-eps}
\end{figure}

\newpage

\section{Empirical runtimes}
\label{appendix-runtimes}

\begin{table}[htp!]
\begin{center}
\begin{tabular}{c|l|ccccc}
\toprule
\multicolumn{1}{c}{} &  \multicolumn{1}{c}{} &  \multicolumn{5}{c}{\textbf{KWLarge} Empirical Runtimes} \\
\cmidrule{3-7}
\addlinespace[10pt]
	$\downarrow$ Layer & Width Mult. $\rightarrow$ & 1 & 2 & 3 & 6 & 8\\
\midrule
\multirow{3}{*}{Cayley} &
	 Test Acc.	& 75.97	& 76.97	& 76.94	& 76.80 & \best{78.28} \\
	& Certified	& 59.51	& 60.86	& \best{61.13}	& 60.37 & 61.03 \\
	& Avg. sec/Epoch & 9.01 & 17.50 & 32.30 & 133.43 & 260.31 \\
\midrule
\multirow{3}{*}{BCOP} &
	 Test Acc. & 75.28&75.81&76.35&	76.55 & --.-- \\
	& Certified  &58.63&59.34&59.69	&59.45 & --.-- \\
	& Avg. sec/Epoch  &20.75&53.72&135.95	&1050.61 & --.-- \\
\midrule
\multirow{3}{*}{RKO} &
	 Test Acc.&74.85&75.74&76.05&	76.29 & --.-- \\
	& Certified&57.59&58.74&59.07&	58.69 & --.-- \\
	& Avg. sec/Epoch &16.02&50.15&131.43&	1034.96 & --.-- \\
\bottomrule
\end{tabular}
\caption{Here we multiplied the input channels and output channels of each layer
of KWLarge by $width$; we report on changes in accuracy and average runtime per epoch
(100 epochs).
Width 1 was on a Nvidia RTX 2080 Ti, while 2, 3, 6, and 8 were on
a Nvidia Quadro RTX 8000.
In this case, we were unable to scale BCOP to width 8 due to time and memory constraints.
Generally, the wider networks may need more epochs to converge.}
\label{runtimes}
\end{center}
\end{table}

\begin{table}[htp!]
\begin{center}
\begin{tabular}{l|ccccc}
\toprule
\multicolumn{1}{c}{} & \multicolumn{5}{c}{Avg. sec/Epoch} \\
\cmidrule{2-6}
Architecture & Cayley & BCOP & RKO & {Plain Conv.} & {Both Plain} \\
\midrule
ResNet9 & 43.92 & 48.04 & 45.73 & 14.56 & 13.27 \\
WideResNet10-10 & 210.6 & 109.4 & 99.46 & 48.40 & 46.10 \\
\bottomrule
\end{tabular}
\caption{Our Cayley layer was not as fast for residual networks, possibly because
	they have convolutions with more channels and also larger spatial dimension,
	which is a multiplicative factor in our runtime analysis. This is especially
	true for the WideResNet. For \emph{plain conv}, we replaced the Cayley convolutional
	layer with a plain circular convolution, leaving the Cayley fully-connected layers.
	For \emph{both plain}, we also used plain fully-connected layers.}
\label{runtimes-resnet}
\end{center}
\end{table}

\newpage

\begin{figure}[h!]
	\centering
	\subfigure[]{\includegraphics[height=2.80cm,trim={0 0 2.0cm 0},clip]{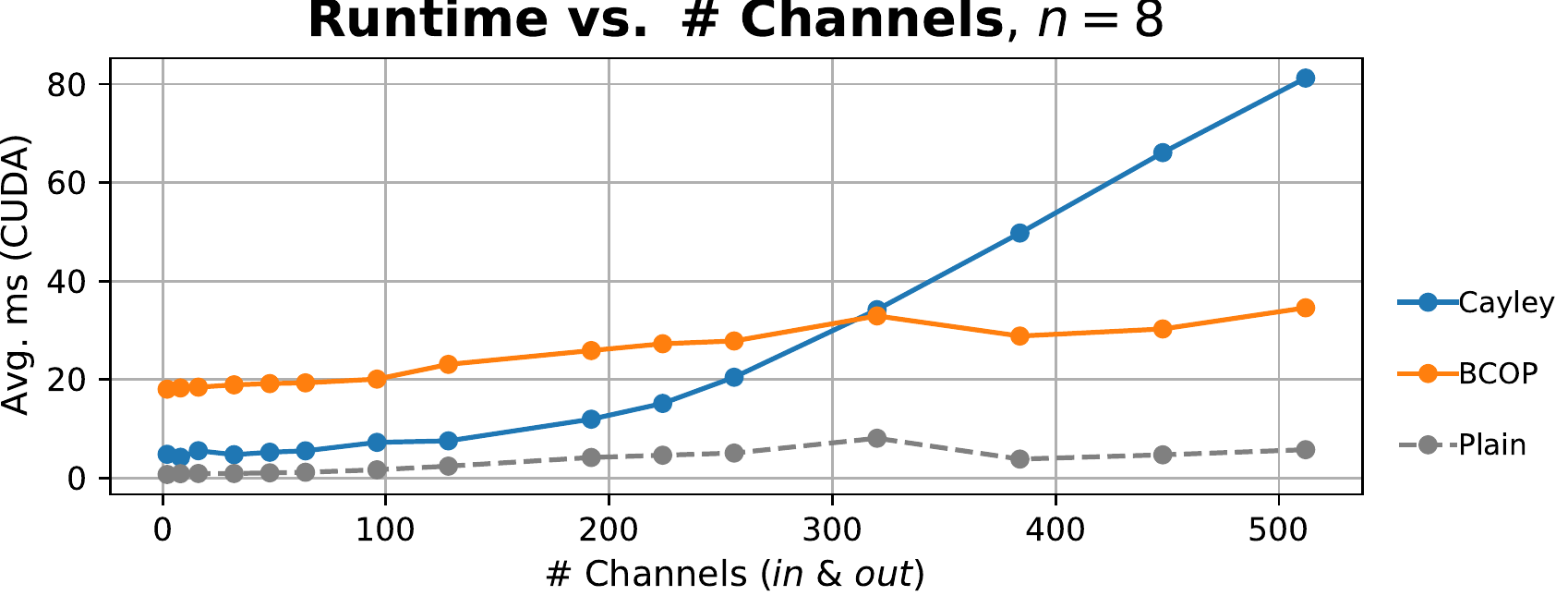}}\hfill
	\subfigure[]{\includegraphics[height=2.80cm,trim={0.4cm 0 0 0},clip]{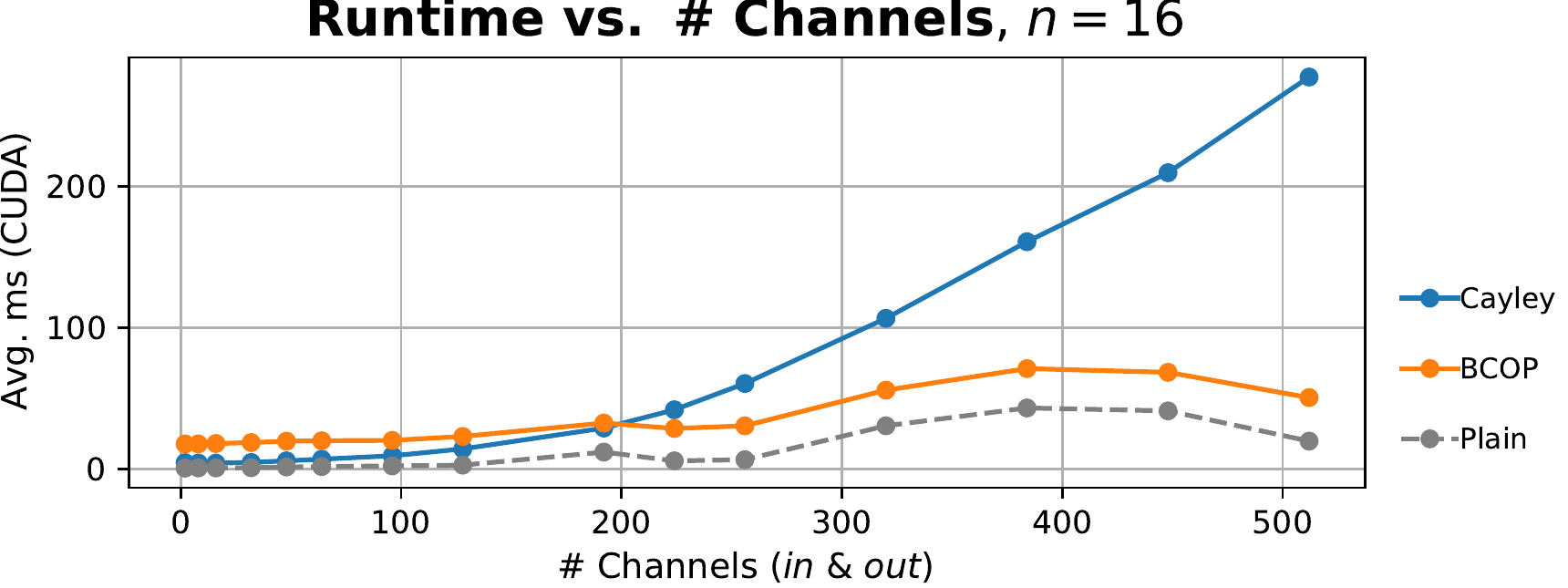}}\\

	\subfigure[]{\includegraphics[height=2.80cm,trim={0 0 2.0cm 0},clip]{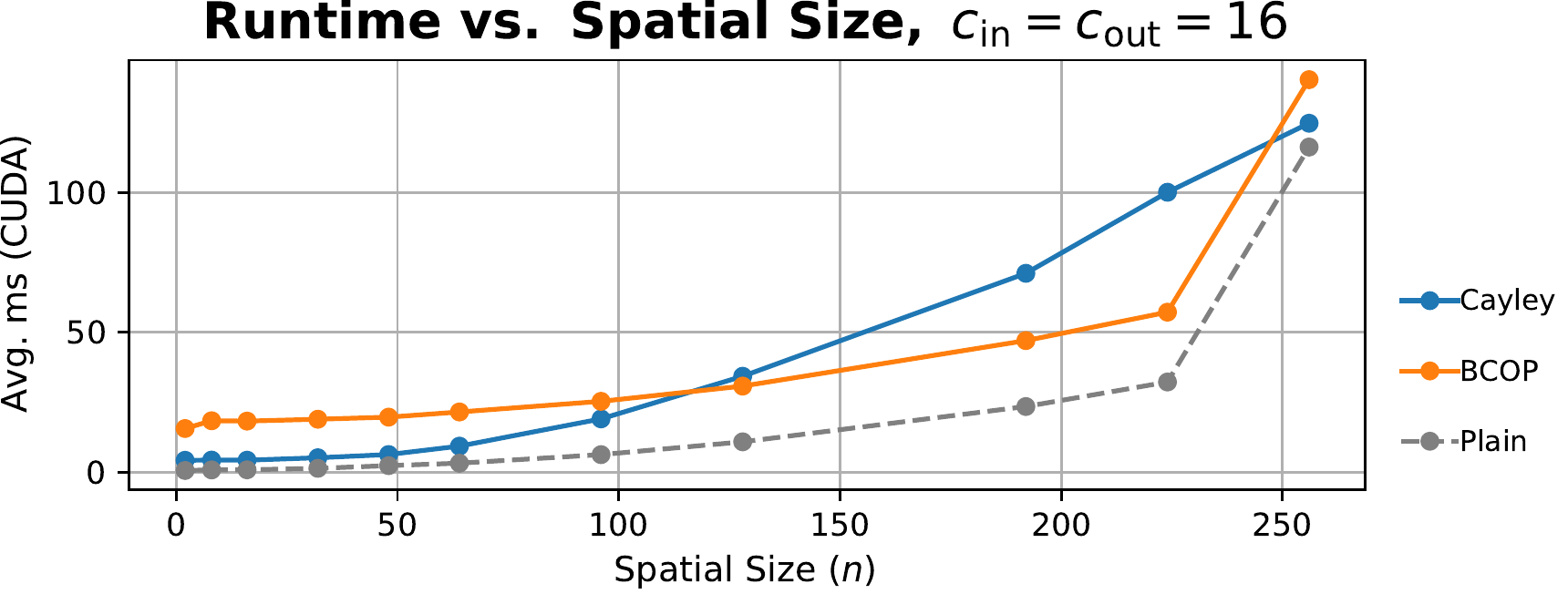}}\hfill
	\subfigure[]{\includegraphics[height=2.80cm,trim={0.4cm 0 0 0},clip]{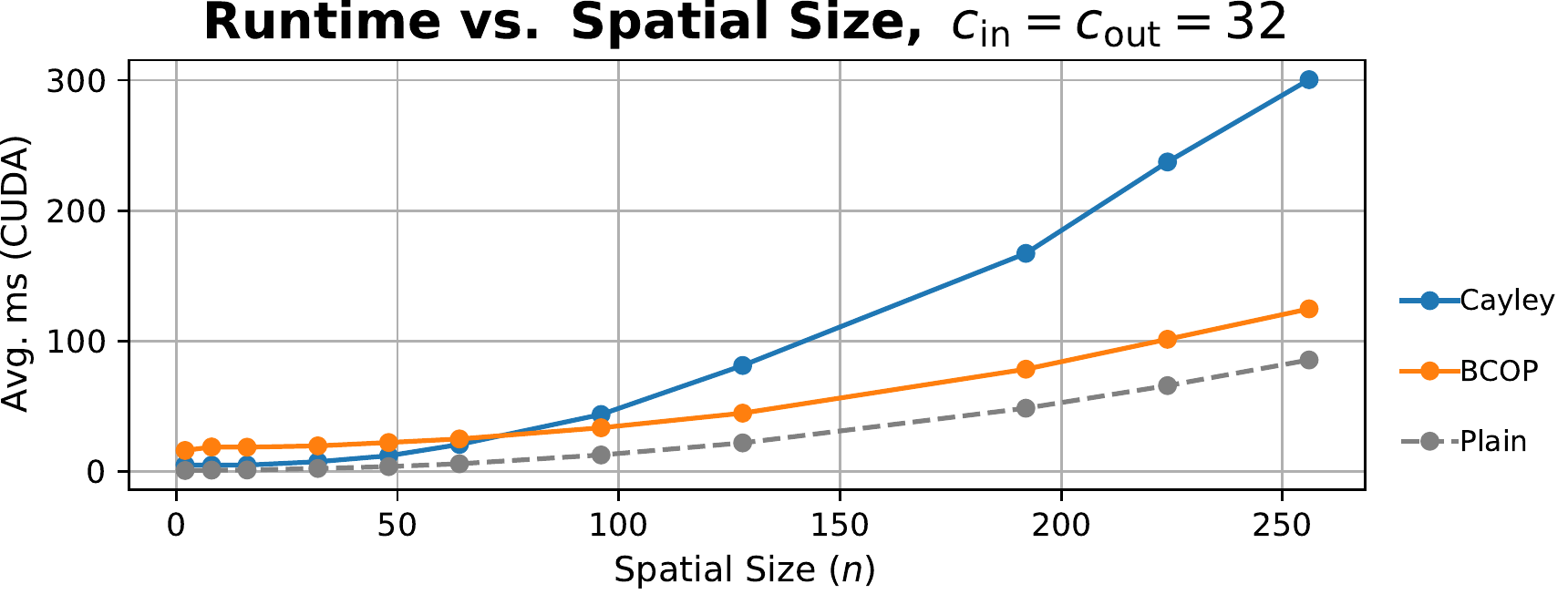}}\\

	\subfigure[]{\includegraphics[height=2.80cm,trim={0 0 2.0cm 0},clip]{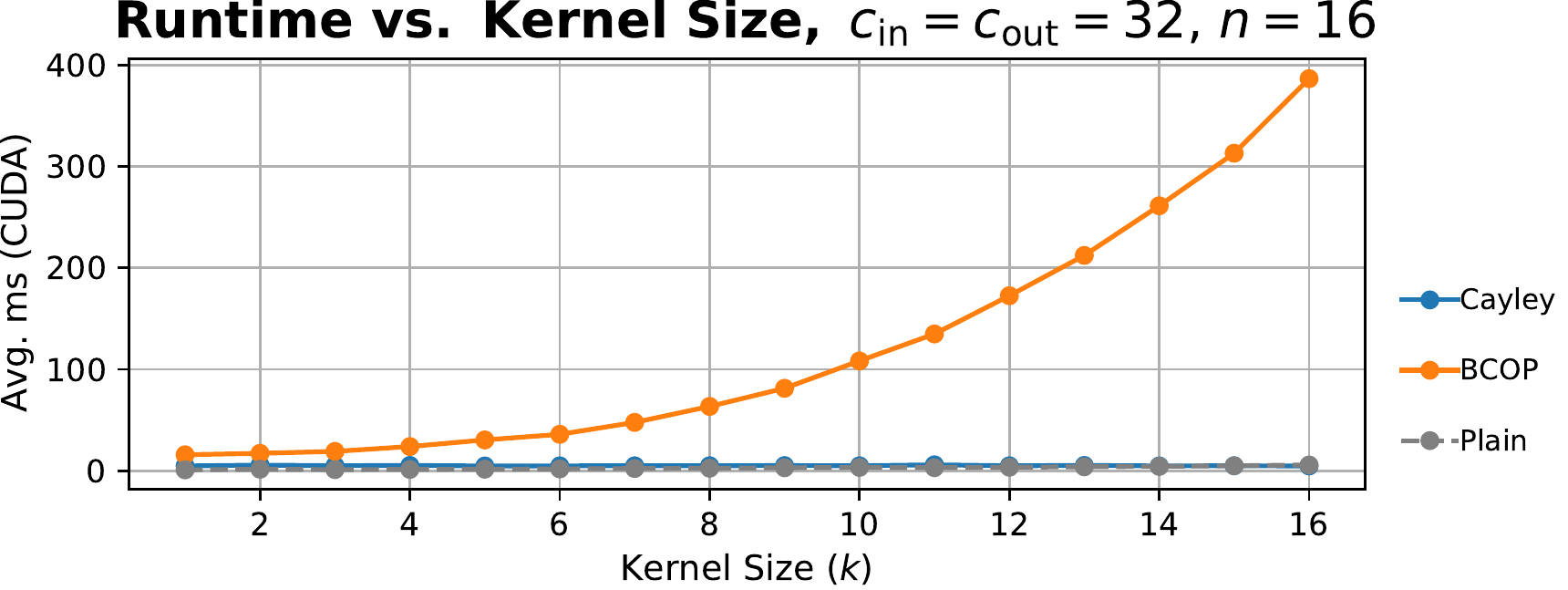}}\hfill
	\subfigure[]{\includegraphics[height=2.80cm,trim={0.4cm 0 0 0},clip]{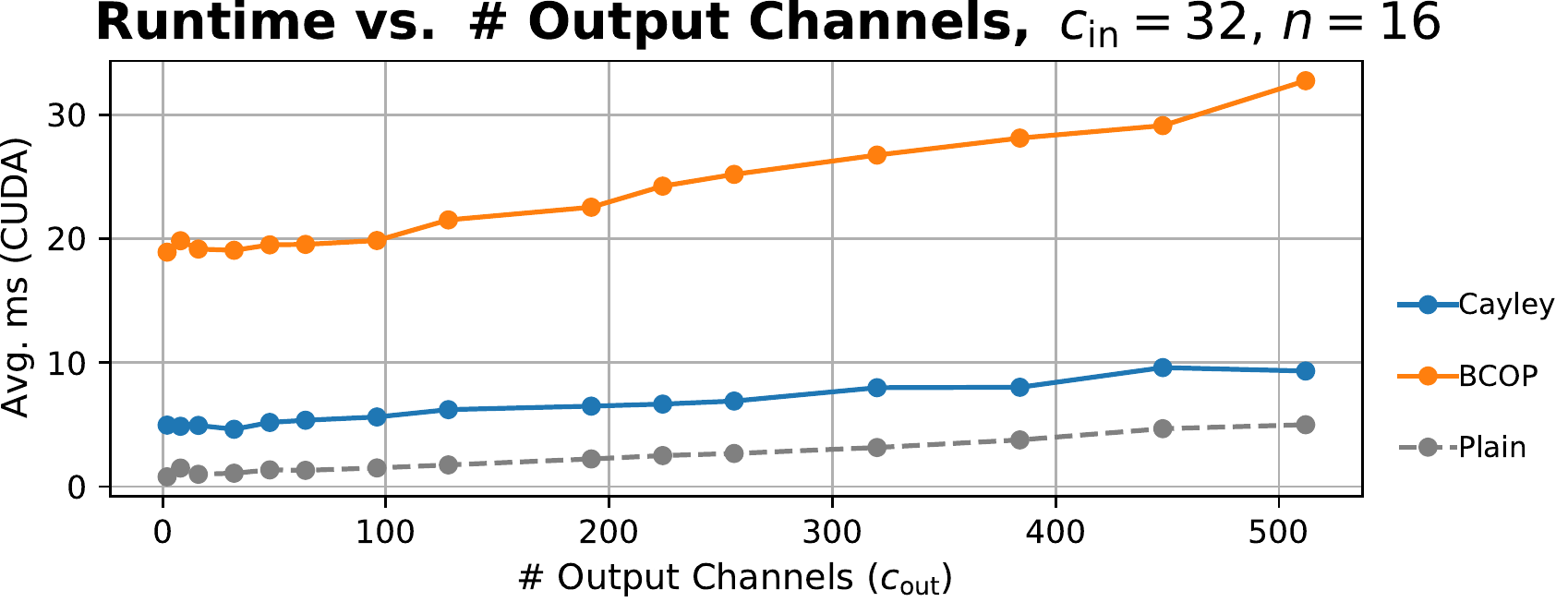}}\\
	\vspace{-0.33cm}
	\caption[Our Cayley layer is often faster than BCOP.]{Our Cayley layer is particulaly efficient for inputs with small spatial
	dimension (width and height, \ie $n$) (see \emph{(a), (c)}), large kernel size $k$ (see \emph{(e)}), and cases where the number of input and output channels are not equal ($c_{\mathsf{in}} \neq c_{\mathsf{out}}$) (see \emph{(f)}).
	For very large spatial size (image width and height) (see \emph{(d)}), or the combination of relatively large
	spatial size and many channels (see \emph{(b)}), BCOP~\citep{bcop} tends to be more efficient.
	Since convolutional layers in neural networks tend to decrease the spatial dimensionality while increasing the number of channels, and also often have unequal numbers of input and output channels, our Cayley layer is often more efficient in practice.
	In all cases, orthogonal convolutional layers are significantly slower than plain convolutional layers.
	\newline
	\newline
	Each runtime was recorded using the autograd profiler in PyTorch~\citep{pytorch} by summing the CUDA execution times. The batch size was fixed at 128 for all graphs, and each data point was averaged over 32 iterations. We used a Nvidia Quadro RTX 8000.} 
	\label{runtime-graphs}
\end{figure}

\newpage
\section{Additional Baseline Experiments}
\subsection{Robustness Experiments}
\begin{table}[h]
	\begin{minipage}{0.45\textwidth}
		\begin{tabular*}{\textwidth}{c|l| @{\extracolsep{\fill}} c @{\extracolsep{\fill}} @{\extracolsep{1cm}} }
		\toprule
			\multicolumn{1}{c}{} & \multicolumn{1}{c}{} & \multicolumn{1}{c}{\textbf{KWLarge}} \\
			\cmidrule{3-3}
		\addlinespace[10pt]
			{$\eps$} & Test Acc.  & {CayleyBCOP} \\
		\midrule
		$0$ 
		& Clean            & 74.35$\PM$.29 \\
		\midrule
		\multirow{4}{*}{$\frac{36}{255}$}
		& PGD              & 66.91$\PM$.10 \\
		&\small AutoAttack & 64.36$\PM$.17 \\
		&\emph{Certified}  & 57.94$\PM$.22 \\
		&Emp.$\mathsf{Lip}$& 0.732$\PM$.01 \\
		\bottomrule
	\end{tabular*}
		\caption{Additional baseline for KWLarge trained for provable robustness. Mean and s.d. over 5 trials.}
		\label{extra-kw}
	\end{minipage}
	{\hspace*{-\textwidth} \hfill}
	\begin{minipage}{0.5\textwidth}
	\begin{tabular}{c|l|cc}
		\toprule
			\multicolumn{1}{c}{} & \multicolumn{1}{c}{} & \multicolumn{2}{c}{\textbf{ResNet9}} \\
			\cmidrule{3-4}
		\addlinespace[10pt]
			{$\eps$} & Test Acc.  & Plain Conv. & {CayleyBCOP} \\
		\midrule
		$0$ 
		& Clean            & 89.22$\PM$.18 & 80.05$\PM$.20\\
		\midrule
		\multirow{2}{*}{$\frac{36}{255}$}
		& PGD              & 70.86$\PM$.06 & 72.58$\PM$.25 \\
		&\small AutoAttack & 68.20$\PM$.10 & 69.90$\PM$.35 \\
		\bottomrule
	\end{tabular}
		\caption{Additional baselines for ResNet9 trained for empirical adversarial robustness. Mean and s.d. over 3 trials.}
		\label{extra-resnet}
		\vspace*{0.8cm}
	\end{minipage}

\end{table}

The main competing orthogonal convolutional layer, BCOP~\citep{bcop}, uses \bjorck~\citep{bjorck1971iterative} orthogonalization
for internal parameter matrices; they also used it in their experiments for orthogonal fully-connected layers.
Similarly to how we replaced the ~method in RKO with the Cayley transform for our CRKO (Cayley RKO) experiments,
we replaced \bjorck~with the Cayley transform in BCOP and used a Cayley linear layer for \emph{CayleyBCOP} experiments,
reported in Tables~\ref{extra-kw} and \ref{extra-resnet}.
We see slightly decreased performance over all metrics, similarly to the relationship between RKO and CRKO.

For additional comparison, we also report on a plain convolutional baseline in Table~\ref{extra-resnet}.
For this model, we used a plain circular convolutional layer and a Cayley linear layer,
which still imparts a considerable degree of robustness.
With the plain convolutional layer, the model gains a considerable degree of accuracy but loses some robustness.
We did not report a plain convolutional baseline for the provable robustness experiments on KWLarge,
as it would require a more sophisticated technique to bound the Lipschitz constants of each layer,
which is outside the scope of our investigation.

\subsection{Wasserstein Distance Estimation}
\label{appendix-wasserstein}

\begin{table}[htp!]
\begin{center}
\begin{tabular}{l|cccc}
\toprule
 & Cayley & BCOP & RKO & OSSN\\
\midrule
	\textbf{Wasserstein Distance:} & \best{10.72} & 10.08 & 9.18 & 7.50 \\
\bottomrule
\end{tabular}
	\caption{For BCOP, RKO and OSSN, we report the \emph{best} bound over all trials from the experiments in the repository
	containing BCOP's implementation~\citep{bcop}. We ran \emph{one trial} of the Wasserstein GAN experiment, replacing the BCOP and \bjorck~layers
	with our Cayley convolutional and linear layers, and achieved a significantly tighter bound. We only report on
	experiments using the GroupSort (MaxMin) activation~\citep{anil2019sorting} and on the STL-10 dataset. }
\label{wasserstein}
\end{center}
\end{table}

We repeated the Wasserstein distance estimation experiment from \cite{bcop},
simply replacing the BCOP layer with our Cayley convolutional layer,
and the \bjorck~linear layer with our Cayley fully-connected layer.
We took the best Wasserstein distance bound from one trial of each of the four
learning rates considered in BCOP (0.1, 0.01, 0.001, 0.0001); see Table~\ref{wasserstein}.

\newpage
\section{Example Implementations}
\label{apx:implementation}

In PyTorch 1.8, our layer can be implemented as follows.

{\small
\begin{minted}{python}
def cayley(W):
    if len(W.shape) == 2:
        return cayley(W[None])[0]
    _, cout, cin = W.shape
    if cin > cout:
        return cayley(W.transpose(1, 2)).transpose(1, 2)
    U, V = W[:, :cin], W[:, cin:]
    I = torch.eye(cin, dtype=W.dtype, device=W.device)[None, :, :]
    A = U - U.conj().transpose(1, 2) + V.conj().transpose(1, 2) @ V
    inv = torch.inverse(I + A)
    return torch.cat((inv @ (I - A), -2 * V @ inv), axis=1)
\end{minted}
}

{\small
\begin{minted}{python}
class CayleyConv(nn.Conv2d):
    def fft_shift_matrix(self, n, s):
        shift = torch.arange(0, n).repeat((n, 1))
        shift = shift + shift.T
        return torch.exp(2j * np.pi * s * shift / n)
    
    def forward(self, x):
        cout, cin, _, _ = self.weight.shape
        batches, _, n, _ = x.shape
        if not hasattr(self, "shift_matrix"):
            s = (self.weight.shape[2] - 1) // 2
            self.shift_matrix = self.fft_shift_matrix(n, -s)[:, :(n//2 + 1)] \
                                .reshape(n * (n // 2 + 1), 1, 1).to(x.device)
        xfft = torch.fft.rfft2(x).permute(2, 3, 1, 0) \
                .reshape(n * (n // 2 + 1), cin, batches)
        wfft = self.shift_matrix * torch.fft.rfft2(self.weight, (n, n)) \
                .reshape(cout, cin, n * (n // 2 + 1)).permute(2, 0, 1).conj()
        yfft = (cayley(wfft) @ xfft).reshape(n, n // 2 + 1, cout, batches)
        y = torch.fft.irfft2(yfft.permute(3, 2, 0, 1))
        if self.bias is not None:
            y += self.bias[:, None, None]
        return y
\end{minted}
}

To make the layer support stride-2 convolutions, have \texttt{CayleyConv} inherit from the following class instead,
which depends on the \texttt{einops} package:

{\small
\begin{minted}{python}
class StridedConv(nn.Conv2d):
    def __init__(self, *args, **kwargs):
        if "stride" in kwargs and kwargs["stride"] == 2:
            args = list(args)
            args[0] = 4 * args[0] # 4x in_channels
            args[2] = args[2] // 2 # //2 kernel_size; optional
            args = tuple(args)
        super().__init__(*args, **kwargs)
        downsample = "b c (w k1) (h k2) -> b (c k1 k2) w h"
        self.register_forward_pre_hook(lambda _, x: \
                einops.rearrange(x[0], downsample, k1=2, k2=2) \
                if self.stride == (2, 2) else x[0])  
\end{minted}
}

More details on our implementation and experiments can be found at:\\
\url{https://github.com/locuslab/orthogonal-convolutions}.

\end{document}